\newcommand{\email}[1]{\protect\href{mailto:#1}{#1}}
\title{Optimal Learning for Stochastic Optimization with Nonlinear Parametric Belief Models}
\author{
  Xinyu He\thanks{Department of Electrical Engineering, Princeton University, Princeton, NJ, 08544 (\email{xinyuhe@princeton.edu}).}
  \and
  Warren B. Powell\thanks{Department of Operations Research and Financial Engineering, Princeton University, Princeton, NJ, 08544 (\email{powell@princeton.edu}).}
}
\colorlet{siaminlinkcolor}{green!50!black}
\colorlet{siamexlinkcolor}{red!50!black}
\newcommand{\SWITCH}[1]{\STATE \textbf{switch} (#1)}
\newcommand{\ENDSWITCH}{\STATE \textbf{end switch}}
\newcommand{\CASE}[1]{\STATE \textbf{case} #1\textbf{:} \begin{ALC@g}}
\newcommand{\ENDCASE}{\end{ALC@g}}
\newcommand{\DEFAULT}{\STATE \textbf{default:} \begin{ALC@g}}
\newcommand{\ENDDEFAULT}{\end{ALC@g}}
\newcommand{\DEFAULTLINE}[1]{\STATE \textbf{default:} }
\newcommand\numberthis{\addtocounter{equation}{1}\tag{\theequation}}
\def\d{\delta}
\def\e{{\epsilon}}
\def\k{\theta}
\begin{document}

\maketitle
\begin{abstract}
We consider the problem of estimating the expected value of information (the knowledge gradient) for Bayesian learning problems where the belief model is nonlinear in the parameters.  Our goal is to maximize some metric, while simultaneously learning the unknown parameters of the nonlinear belief model, by guiding a sequential experimentation process which is expensive.  We overcome the problem of computing the expected value of an experiment, which is computationally intractable, by using a sampled approximation, which helps to guide experiments but does not provide an accurate estimate of the unknown parameters.  We then introduce a resampling process which allows the sampled model to adapt to new information, exploiting past experiments.  We show theoretically that the method converges asymptotically to the true parameters, while simultaneously maximizing our metric.  We show empirically that the process exhibits rapid convergence, yielding good results with a very small number of experiments.
\end{abstract}

\begin{keywords}
  Optimal Learning, Knowledge Gradient, Nonlinear Parametric Model
\end{keywords}
\begin{AMS}
  62L05, 62F07, 62F12, 62F15
\end{AMS}

\section{Introduction}

We consider the following problem: we have a function over a finite number of alternatives, where the expression of the function is known except for some unknown parameters. Our goal is both to learn the correct parameters and to find the alternative that maximizes the function. Information can be collected by measuring the function value at any chosen alternative. However, the measurements are noisy and expensive, and we only have a limited budget to evaluate these alternatives. After exhausting the budget, we have to provide the best estimate of the unknown parameters and identify the best alternative to maximize our metric.


The optimization of functions with noisy measurements has been broadly studied in the stochastic search community. \cite{Spall2003} provides an extensive review of different methods for stochastic search. Some commonly used algorithms include gradient estimation \cite{Fu2006}, response surface methodology \cite{Barton2006}, and metaheuristics such as tabu search and genetic algorithms (\cite{Olafsson2006}, also see \cite{Bianchi2009} for more on metaheuristics). These methods are usually designed for functions that are relatively easy to evaluate, while our focus is on problems where a function evaluation is expensive (and we do not have access to derivatives).

One of the main challenges of the problems with expensive experiments is known as ``\textit{exploration vs exploitation}'' (see Chapter 12 of \cite{Powell2011}), which requires a balance of exploiting the current optimal solutions and exploring with uncertain outcomes to learn the problem. Specifically, for a finite number of alternatives, some heuristics to strike this balance include epsilon-greedy (\cite{Sutton1998}, also see \cite{Singh2000} for convergence analysis), interval estimation \cite{Kaelbling1993}, and Chernoff interval estimation \cite{Streeter2006}. These methods usually perform well on small problems but have scaling issues in higher dimensions. More sophisticated methods to collect information efficiently include Gittins indices \cite{Gittins2011}, upper confidence bound methods \cite{Auer2002},  and expected improvement (\cite{Jones1998}, also see \cite{Gramacy2011} for recent work). 


This problem is an extension of the ranking and selection (R\&S) problem (see \cite{Kim2006,Hong2009} and the references cited there), which focuses on finding the optimal alternative, as opposed to learning the values of an underlying, parametric belief model. There have been two major approaches to R\&S problems: the frequentist approach and the Bayesian approach. The frequentist approach assumes that information only comes from observations and uses the distribution of the observed data to estimate the parameters (see, e.g., \cite{Hastie2009,Kim2006,Audibert2010}). The Bayesian approach begins with a probabilistic belief model about the true parameters which are updated as new observations are collected (see, e.g., \cite{Chen2000,Chick2006}). For the Bayesian approach, there are mainly two directions: the Optimal Computing Budget Allocation (OCBA) (see, e.g., \cite{Chen1995,Chen2000,He2007}), which tries to maximize the posterior probability of correct selection, and the Value of Information (VoI) procedures (see, e.g., \cite{Chick2001a,Frazier2008}), which maximizes the improvement in a single experiment. Below we give a brief review of the VoI procedures, with special attention given to the knowledge gradient \cite{Frazier2008}.


Since every experiment is expensive or time consuming (consider for example, the laboratory experiments in materials science that might take days or weeks), it is important to maximize the value of each measurement. \cite{Gupta1996} proposes the idea of computing the marginal value of information from a single experiment. Based on this, \cite{Frazier2008} extends the idea using the Bayesian approach, and presents the Knowledge Gradient (KG) policy. It analyzes the case where all alternatives are independent, and also proves that KG is the only stationary policy that is both myopically and asymptotically optimal. \cite{Frazier2009} adapts the knowledge gradient to handle correlations in the beliefs about discrete alternatives. Both \cite{Frazier2008} and \cite{Frazier2009} use lookup table belief models, and become computationally expensive when the number of alternatives becomes large, as typically happens when an experimental choice involves multiple dimensions. \cite{Negoescu2011} is the first paper that studied KG using a parametric belief model. It assumes that the belief model is linear in some unknown parameters, and imposes the uncertainty of the function values onto the parameters. This strategy reduces the number of parameters to be estimated from the number of alternatives (required by a lookup table belief model) to a much lower-dimensional parameter vector. While the knowledge gradient is easy to compute for belief models that are linear in the parameters, nonlinear belief models are computationally intractable. \cite{Chen2014} studies the more general nonlinear parametric model, and proposes the Knowledge Gradient with Discrete Priors (KGDP), which overcomes the computational issues, but requires that we represent the uncertainty about the unknown parameters using a finite number of samples, where one of the samples has to be the true parameter. KGDP is able to handle any nonlinear belief model, but the assumption that one of the candidates is correct (we refer to this assumption later as the \textit{truth from prior} assumption) seems too strong in most real world settings, especially when the parameters have four or more dimensions. Moreover, \cite{Chen2014} also fails to give any theoretical proof of convergence of KGDP given the truth from prior assumption.

In this paper, we present a resampling algorithm that works with KGDP to find not only the best alternative but also the correct parameters without the assumption of one candidate being the truth. We start with several potential candidates, and use the information from the measurements to guide resampling and discover more probable candidates. Regarding the objective of finding the correct parameter, we propose a new metric for calculating the knowledge gradient that minimizes the entropy of the candidate parameters. We also prove the convergence of both the non-resampling (with truth from prior) and the resampling KGDP algorithms. In the experimental section, we apply our algorithm to a real problem in materials science: optimizing the kinetic stability of an experimental problem involving the control of a water-oil-water (W/O/W) nano emulsion \cite{Chen2014}. Compared with \cite{Chen2014}, our algorithm shows significant improvements.

In order to find the correct parameter, we need to deal with the parameter space (denoted as $\k$ space) besides the alternative space (denoted as $x$ space). Here arises a dual problem: in $x$ space, we solve a maximization problem to figure out which alternative maximizes some performance metric (strength, conductivity, reflexivity); in $\k$ space, we solve another optimization problem to locate the most promising candidates. The second optimization problem is solved via minimizing the mean square error (MSE).


To the best of our knowledge, this is the first optimal learning paper that addresses the dual problems of objective maximization with parameter identification for problems with parametric belief models (other literatures with similar dual-objective formulations usually assume that experiments are inexpensive, e.g, \cite{2014arXiv1402}). Previous papers only concentrate on discovering the optimal alternative, but in many real world situations, scientists also care about getting accurate estimates of the parameters. For example, materials scientists may want to know both the tunable variables of an experiment (such as the temperature, the pressure, and the density), and some unknown and uncontrollable parameters, which can help them understand the intrinsic nature of the physical phenomena. Our work in this paper provides a powerful solution to this issue.

This paper makes the following major contributions:
\begin{itemize}
    \item We present a resampling algorithm in the parameter space, which works with the optimal learning methods in the alternative space, to discover both the optimal alternative and the correct parameters.
    \item We propose a new metric to calculate the knowledge gradient, which focuses on reducing the uncertainty of parameters by maximizing entropic loss.
    \item We prove the asymptotic optimality of both the nonresampling (with truth from prior) and the resampling algorithms using either traditional function oriented metric or our new entropy oriented metric.
    \item We show experimentally that our resampling algorithm has impressive performance in both finding the optimal alternative and estimating the parameters.
\end{itemize}

The paper is organized as follows. Section~\ref{sec2} reviews the principles of optimal learning based on maximizing the value of information, along with the knowledge gradient with discrete priors (KGDP) proposed in \cite{Chen2014}. In Section~\ref{sec3}, we introduce the resampling algorithm as well as the entropy oriented metric for calculating the knowledge gradient. Section~\ref{sec5} provides the asymptotic convergence proof of KGDP without resampling under truth from prior assumption, and Section~\ref{sec6} proves asymptotic convergence of KGDP with resampling. We show empirical results on both a simulated and a real-world problem in Section~\ref{sec7}, and finally conclude in Section~\ref{sec8}. 



\section{Knowledge Gradient for Nonlinear Belief Models}\label{sec2}

In this section, we first review the ranking and selection problem along with the knowledge gradient policy, and then introduce the model of Knowledge Gradient with Discrete Priors.


\subsection{Ranking and Selection (R\&S) Problem}

Suppose we have a finite number of alternatives $\mathcal{X}=\{x_1,x_2,...,x_M\}$. Each alternative $x\in \mathcal{X}$ is associated with a true utility $\mu_{x}$, which is presumed unknown to us. The goal is to determine the $x$ that has the largest utility through a budget of $N$ sequential measurements. At time $n$ (i.e, after $n$ measurements), suppose we decide to query alternative $x^n$ according to some policy, and our measurement is $\hat{y}^{n+1}$, where the superscripts imply that $\hat{y}^{n+1}$ will be unknown until the $(n+1)$-{th} measurement. We assume the inherent noise ($W^n$) in each measurement follows a normal distribution with zero mean and variance $\sigma^2$, where $\sigma$ is known to us. That is, for $n=0,1,...,N-1$, 
\small\begin{align*}
\hat{y}^{n+1}=\mu_{x^n} + W^{n+1},
\end{align*}\normalsize
where $W^{n+1}\sim \mathcal{N}(0,\sigma^2)$.


For each $x\in \mathcal{X}$, we  use $\theta^n_x$ as our estimate of the true utility $\mu_x$ after $n$ experiments. Our goal is to select the $x$ with the highest estimate after $N$ measurements, i.e.,
\small\begin{align*}
x^N = \mathop{\text{argmax}}_{x\in\mathcal{X}}\theta_x^N.
\end{align*}\normalsize


\subsection{Knowledge Gradient}

Let $S^n$ denote the state of knowledge at time $n$, and let $V^n(S^n)$ be the value of information if we are in state $S^n$. In the R\&S problem, we have $V^n(S^n)=\max_{x'\in\mathcal{X}}\theta^n_{x'}$. The transition from $S^n$ to $S^{n+1}$ occurs when we take a measurement at $x^{n}=x$ and observe $\hat{y}^{n+1}$, with
\small\begin{align*}
V^{n+1}(S^{n+1}(x))= \max_{x'\in\mathcal{X}}\theta_{x'}^{n+1}(x).
\end{align*}\normalsize

At time $n$, $\theta_{x'}^{n+1}(x)$ is a random variable since it depends on the noise $W^{n+1}$. We would like to maximize our expected incremental information from the next measurement, which we call the \textit{knowledge gradient}. At time $n$, the knowledge gradient of $x$ is defined as:
\small\begin{align}\label{eq_KG}
\nu^{KG,n}(x) &= \mathbb{E}^n\left[V^{n+1}(S^{n+1}(x))-V^n(S^n)\right] \notag\\
&= \mathbb{E}^n \left[\max_{x'\in\mathcal{X}}\theta_{x'}^{n+1}(x)|S^n=s,x^n=x\right] - \max_{x'\in\mathcal{X}}\theta^n_{x'}.
\end{align}\normalsize

In each iteration, the \textit{Knowledge Gradient} policy measures the alternative with the largest knowledge gradient:
\small\begin{align*}
x^{KG,n} = \mathop{\text{argmax}}_{x\in\mathcal{X}}\nu^{KG,n}(x).
\end{align*}\normalsize

This is a myopic policy that maximizes the value of information from a single experiment \cite{Frazier2008}.

The knowledge gradient Equation~(\ref{eq_KG}) is easy to calculate when using a lookup table belief model (\cite{Frazier2008} and \cite{Frazier2009}), or when the belief model is linear in the parameters \cite{Negoescu2011}, with the form ${f}(x;\k) = \k_0 + \k_1 \phi_1(x) + ... +\k_n \phi_n(x)$. We run into problems, however, if the function is nonlinear in the parameter vector $\k$, since we have to incorporate the updating mechanism in the representation of $\theta^{n+1}_{x'}(x)$ in Equation~(\ref{eq_KG}).  When this is imbedded within both the maximum and expectation operator, the resulting expression becomes computationally intractable.  We address this issue in the next section.


\subsection{Knowledge Gradient with Discrete Priors (KGDP)}

The Knowledge Gradient with Discrete Priors (KGDP) \cite{Chen2014} is designed to handle parametric belief models that are nonlinear in the parameters. Suppose we have a function $f(x;\k)$, where $x$ is the alternative and $\k$ represents the unknown parameters. The expression of $f(x;\k)$ is known to us except for the unknown $\k$. Let $\k^*$ denote the true parameter. Our goal is to find the $x$ that maximizes the true function $f(x;\k^*)$. KGDP assumes that $f(x;\k^*)$ can be approximated as a convex combination of $L$ candidates:
\small\begin{align*}
f(x;\k^*) \approx \bar{f}^n(x)=\sum_{i=1}^Lf(x;\k_i)p_i^n,
\end{align*}\normalsize
where $\k_i$'s are known as the candidates, with $p_i^n$'s as their probabilities at time $n$.

KGDP requires that one of the candidates is equal to or very close to the truth. We call this the \textit{truth from prior }assumption, while its opposite, {\it truth not from prior}, recognizes that the true $\k$ may not belong to our sampled set. As we take measurements, the $\k$'s are fixed, while the probabilities $p_i$'s get updated after each iteration, which means that the belief is naturally conjugate. The state variable $S^n$ is defined as the probability vector, $S^n=(p_1^n,p_2^n,...,p_L^n,)$. The KGDP score of each alternative $x$ can be calculated according to Equation~(\ref{eq_KG}) (we review this formula in \hyperref[sec_KGDP_eq]{Section \ref{sec_KGDP_eq}}). We calculate the KGDP scores of all alternatives, and select the one with the highest score to evaluate next. After we exhaust the budget, the true parameters can be estimated by choosing the most probable candidate, i.e, $\hat{\k}^* = \k_i$, where $i=\mathop{\text{argmax}}_{l\in[L]} p_l^N$.
KGDP provides a powerful approach to handling nonlinear parametric models if we use a sampled belief model for the unknown parameters. However, the complexity of KGDP grows quickly as the number of candidates increases. In the experiments described in \cite{Chen2014}, it can handle at most tens of candidates. Moreover, the truth from prior assumption may be a reasonable approximation if $\k$ has one or two dimensions, but in higher dimensions it is unlikely that this would be the case.  This is particularly problematic when we are interested in finding the best possible estimate of the parameters themselves.  For this reason, we propose a resampling algorithm that can adaptively find more promising parameter candidates.


\section{Optimal Learning for Parameters and Alternatives}\label{sec3}


Our dual-objective optimization problem requires handling the alternative space and the parameter space simultaneously. Note that while finding the best alternative and learning the best estimate of the parameter vector each have their own metrics, they are not conflicting goals, since the best estimate of the parameters can lead to the best design.

Traditionally, the optimal learning literature has focused on optimizing some metric such as choosing the best parameters to maximize the strength of a material or minimize the number of defects.  When we use parametric belief models, we can achieve these goals by finding good estimates of the unknown parameters, and then optimizing a (deterministic) nonlinear model based on the estimates.  To capture the value of learning the unknown parameters correctly, we can replace our original metric with the entropy of the belief vector.


\subsection{The Resampling Scheme}

As in KGDP, we still keep a set of $L$ candidates, but they change over time. Let $\mathbb{L}^n=\{\k_1^n,...,\k_L^n\}$ denote the candidate set at time $n$, and $\vec{p^n}=\{p_1^n,...,p_L^n\}$ denote the probabilities. Assume that the true function $f(x;\k^*)$ can be approximated as
\small\begin{align*}
f(x;\k^*)\approx\bar{f}^n(x)=\sum_{i=1}^Lf(x;\k_i^n)p_i^n.
\end{align*}\normalsize


The candidates are no longer fixed and resampling happens periodically, in which the least probable candidates get replaced by more probable ones according to the whole history of experiments. Hence, we need to search two spaces: in the outer loop, we work in the $x$ space to collect as much information as possible; in the inner loop, we search the $\k$ space for more promising $\k$'s. 


We use the following process for testing our resampling strategy.  We start by choosing a large sample of $K$ (say $K=1,000,000$) realizations of $\k$'s, and then choose one of these as the truth. We call the $K$ samples the \textit{large pool}, denoted as $\mathbb{K}$. Then, we choose a small sample set $\mathbb{L}$ as our sampled prior.  The idea is that with such a large base population, it will be very unlikely that one of the $\k$'s in our sampled prior will be the truth (or even close to it).

As the candidates' probabilities change over time, under certain conditions resampling is triggered. We use mean square error (MSE) as the criterion to choose $\k$'s in the resampling steps.




\subsection{Mean Square Error (MSE) Formulation}

We first introduce the formulation of the MSE problem, which is equivalent to the maximum likelihood estimation (MLE) in Gaussion noise settings. Suppose that after $n$ measurements, the history of experimental results is $\mathcal{F}^n=\sigma\{(x^0,\hat{y}^1),...,(x^{n-1},\hat{y}^n)\}$. The likelihood of each $\k_i$ is given by:
\small\begin{align*}
\mathcal{L}(\k_i|\mathcal{F}^n)=\prod_{j=1}^n\exp\left(-\frac{[\hat{y}^{j}-f(x^{j-1};\k_i)]^2}{2\sigma^2}\right).
\end{align*}\normalsize

By taking the logarithm of $\mathcal{L}(\k_i|\mathcal{F}^n)$, we have the MSE formula for $\k_i$ as
\small\begin{align*}
MSE(\k_i|\mathcal{F}^n)=\frac{1}{n}\sum_{j=1}^n[\hat{y}^j - f(x^{j-1};\k_i)]^2.
\end{align*}\normalsize

The idea is that each time resampling is triggered, we calculate the MSE of all $\k$'s in the large pool $\mathbb{K}$. Although $K$ can be arbitrarily large, this calculation is considered much faster and more efficient compared to the expensive physical experiments. Then resampling is conducted in the sub-level set of the MSE function $MSE(\k)$. We define a threshold $\d$, and all $\k$'s with MSE smaller than $\d$ form the sub-level set $\mathbb{S}^n$:
\begin{align*}
\mathbb{S}^n=\{\k\in\mathbb{K}|MSE(\k|\mathcal{F}^n)\leq \d\}.
\end{align*}

Alternatively, we can define a \textit{small pool} of a certain size $R$ that contains the $R$ $\k$'s with the smallest MSE, namely
\begin{align*}
\mathbb{S}^n=\{\k\in\mathbb{K}|MSE(\k|\mathcal{F}^n)\leq MSE_R\}.
\end{align*}
where $MSE_R$ denotes the value of the $R$-th smallest $MSE$. In other words, the small pool $\mathbb{S}^n$ contains the most probable $\k$'s up to now. We resample from the small pool a number of new candidates as needed.

On the one hand, this method rules out the majority of unlikely samples and avoids exploring too large a space inefficiently; on the other hand, it supplies a small pool containing sufficiently promising samples to resample from, thus providing plenty of exploration compared with simply selecting the $L$ samples with the smallest MSE.


\subsection{Updating the Probabilities of Candidates}


For candidate $\k_i^n$, suppose its prior probability is $p_i^{n}$. After the $(n+1)$-th measurement given by $\hat{y}^{n+1}\sim \mathcal{N}(f(x^n;\k_i^n), \sigma^2)$, the likelihood of $\hat{y}^{n+1}$ is given by
\begin{align*}
g_Y(\hat{y}^{n+1}|\k^*=\k_i^n) = \frac{1}{\sqrt{2\pi}\sigma}\exp\left(-\frac{[\hat{y}^{n+1}-f(x^n;\k_i^n)]^2}{2\sigma^2}\right).
\end{align*}

By Bayes' rule, the posterior probability is proportional to the prior probability times the likelihood:
\begin{align}\label{eq_mse}
p^{n+1}_i\propto g_Y(\hat{y}^{n+1}|\k^*=\k_i^n) p^n_i = \frac{1}{\sqrt{2\pi}\sigma}\exp\left(-\frac{[\hat{y}^{n+1}-f(x^n;\k_i^n)]^2}{2\sigma^2}\right)p_i^n.
\end{align}

After normalization, the updating rule of $p_i$ is given by
\begin{align}\label{eq_p_update1}
p_i^{n+1}=\frac{\exp\left(-\frac{[\hat{y}^{n+1}-f(x^n;\k_i^n)]^2}{2\sigma^2}\right)p_i^n}{\sum_{l=1}^L\exp\left(-\frac{[\hat{y}^{n+1}-f(x^n;\k_l^n)]^2}{2\sigma^2}\right)p_l^n}.
\end{align}

After the $(n+1)$-th measurement, if resampling is not triggered, $p_i^{n+1}$ can be updated as Equation~(\ref{eq_p_update1}) for $i=1,2,...,L$. Otherwise, we should calculate the likelihood of $\k_i$ by all the previous measurements, given by
\begin{align}\label{eq_likeli}
g_Y(\hat{y}^{1},...,\hat{y}^{n+1}|\k^*=\k_i^{n+1})=\prod_{j=0}^{n}\exp\left(-\frac{[\hat{y}^{j+1}-f(x^j;\k_i^{n+1})]^2}{2\sigma^2}\right).
\end{align}

Since initially all parameters are associated with equal probability, if resampling happens at time $(n+1)$, $p^{n+1}_i$ is given by
\begin{align}\label{eq_p_update2}
p^{n+1}_i=\frac{\prod_{j=0}^{n}\exp\left(-\frac{[\hat{y}^{j+1}-f(x^j;\k_i^{n+1})]^2}{2\sigma^2}\right)}{\sum_{l=1}^L\prod_{j=0}^{n}\exp\left(-\frac{[\hat{y}^{j+1}-f(x^j;\k_l^{n+1})]^2}{2\sigma^2}\right)},
\end{align}
where $i$ and $l$ index the set of candidates after resampling.

\subsection{Resampling Procedure}

Resampling is triggered under either of two conditions: 1) the same set of candidates have been used for $n^{resamp}$ iterations; 2) more than $L/2$ candidates have probabilities lower than $\e$. The resampling process goes as follows.

1. When resampling is triggered, we first remove the candidates with low probabilities. Denote the set as $\mathbb{L}^n_{rm}$:
\begin{align*}
\mathbb{L}^n_{rm}=\{\k\in\mathbb{L}^n|p^n(\k)\leq\e\}.
\end{align*}
Note that if $p^n(\k)>\e$ for all $\k\in\mathbb{L}^n$, we still select a small portion (say $1$ or $2$) of the least probable ones as $\mathbb{L}^n_{rm}$. In this way, we can avoid getting stuck in a set of unlikely candidates (for example, in an extreme case where we have $L$ identical but wrong candidates, we get stuck if not dropping anyone). 

2. Then we calculate the MSE of each $\k$ in the large pool by Equation~(\ref{eq_mse}), and select $R$ ones with the smallest MSE to form the small pool.

3. Next, calculate the likelihood of each $\k$ in the small pool given by Equation~(\ref{eq_likeli}). We then use the likelihoods as weights and do weighted sampling without replacement to select $|\mathbb{L}^n_{rm}|$ $\k$'s and add them to the candidate set.

4. Once the candidates are updated, we update their probabilities accordingly using the whole measurement history according to Equation~(\ref{eq_p_update2}).

5. Finally, we check if the current set can still trigger the resampling conditions, since the new set may still contain over $L/2$ candidates with probabilities lower than $\e$. If not, resampling finishes. Otherwise, repeat the resampling process.

A detailed flowchart is given in \hyperref[detail_workflow]{Section \ref{detail_workflow}}.

\subsection{Evaluation Metrics}

We introduce two Knowledge Gradient related policies for choosing the next alternative to measure. Corresponding to our dual objectives, they focus on maximizing the performance metric and learning the parameter respectively. The first one, initially given in \cite{Chen2014}, is function value oriented and hence denoted as KGDP-$f$. The second one, denoted as KGDP-$H$, focuses on minimizing the entropy of the belief vector $\vec{p}^n=\{p_1^n,...,p_L^n\}$, which leads to a better estimate of $\k^*$, from which we can then optimize the original function $f(x;\k)$.









\subsubsection{KGDP-$f$}\label{sec_KGDP_eq}

We can measure the expected incremental function value as in \cite{Chen2014}, where the formula of the KGDP-$f$ was originally given. At time $n$, we define $S^n$ as the probability vector $(p_1^n,...,p_L^n)$, $V^n(S^n)$ as the current largest estimate, i.e, $\max_{x\in\mathcal{X}}\bar{f}^n(x)$ (recall that $\bar{f}^n(x)$ is our estimate of $f(x;\k^*)$ at time $n$, given by $\bar{f}^n(x)=\sum_{i=1}^L f(x;\k_i^n)p_i^n$). Let $p^{n+1}(x)$ represent the posterior probability after measuring $x$.  KGDP-$f$ is calculated in \cite{Chen2014} as:
\small
\begin{align}\label{eq_KGDP1}
\nu^{KGDP-f,n}(x) 
= & \mathbb{E}^n\left[\max_{x'}\sum_{i=1}^Lf_i(x')p_i^{n+1}(x)|S^n=s,x^n=x\right]-\max_{x'}\sum_{i=1}^Lf_i(x')p_i^n \notag\\
=& \sum_{j=1}^L\left[\int_{\omega}\max_{x'}\frac1{c_j}\sum_{i=1}^L f_i(x')\exp\left(-\frac{[f_j(x)-f_i(x)+\omega]^2}{2\sigma^2}\right)p_i^ng(\omega)d\omega \right] p_j^n \notag \\
    &-\max_{x'}\sum_{i=1}^L f_i(x')p_i^n,
\end{align}\normalsize
where $i$ and $j$ index the candidates in $\mathbb{L}^n$, $c_j=\sum_{i=1}^L\exp\left[-\frac{(f_j(x)-f_i(x)+\omega)^2}{2\sigma^2}\right]p_i^n$, $g(\omega)=\frac{1}{\sqrt{2\pi}\sigma}\exp\left(-\frac{\omega^2}{2\sigma^2}\right)$, and $f_i(x)$ is short for $f(x;\k_i^n)$.

In each integral, let $f_j(x)+\omega=\hat{y}$, and the above equation can be simplified as
\small
\begin{align}\label{eq_KGDP2}
\nu^{KGDP-f,n}(x)
=&\frac{1}{\sqrt{2\pi}\sigma}\int_{-\infty}^{+\infty}\max_{x'}\left[\sum_{i=1}^L f_i(x')p_i^n\exp\left(-\frac{[\hat{y} - f_i(x)]^2}{2\sigma^2}\right)\right]d\hat{y} \notag\\
&-\max_{x'}\sum_{i=1}^L f_i(x')p_i^n,
\end{align}\normalsize
where $i$ indexes the candidates at time $n$. Compared with Equation~(\ref{eq_KGDP1}) in \cite{Chen2014}, Equation~(\ref{eq_KGDP2}) is simpler for both calculation and theoretical proof.

In KGDP-$f$, our decision at time $n$ is
\begin{align*}
x^n= \mathop{\text{argmax}}_{x\in\mathcal{X}}\nu^{KGDP-f,n}(x).
\end{align*}



\subsubsection{KGDP-$H$}

Entropy is a metric to measure the uncertainty of unknown data, which is widely used especially in information theory. The entropy of the candidates at time $n$ is given by
\small
\begin{align*}
H(p_1^n,...,p_L^n) = -\sum_{i=1}^L p_i^n\log p_i^n.
\end{align*}\normalsize

In KGDP-$H$, we measure the alternative that has the largest expected entropic loss. Define the state variable $S^n$ as the probability vector $(p_1^n,p_2^n,...,p_L^n,)$, and let $V^n(S^n)$ be the entropy. The KGDP-$H$ score is given by
\small
\begin{align}\label{eq_KGDP-H1}
\nu^{KGDP-H,n}(x) &= \mathbb{E}^n\left[\sum_{i=1}^L p_i^{n+1}(x)\log p_i^{n+1}(x) |S^n=s,x^n=x\right]-\sum_{i=1}^L p_i^n\log p_i^n \notag \\
&= \sum_{j=1}^L \left(\int_\omega \sum_{i=1}^L p_{i|j}^{n+1}(x,\omega)\log p_{i|j}^{n+1}(x,\omega) g(\omega) d\omega \right) p_j^n - \sum_{i=1}^L p_i^n\log p_i^n,
\end{align}\normalsize
where $g(\omega)=\frac{1}{\sqrt{2\pi}\sigma}\exp\left(-\frac{\omega^2}{2\sigma^2}\right)$, and $p_{i|j}^{n+1}(x,\omega)$ is the probability of $\k_i$ at time $(n +1)$ given that $\k_j$ is the truth and the noise is $\omega$, given by
\begin{align*}
p_{i|j}^{n+1}(x,\omega)=\frac{\exp\left[-\frac{(f_j(x)-f_i(x)+\omega)^2}{2\sigma^2}\right]p_i^n}{\sum_{k=1}^L \exp\left[-\frac{(f_j(x)-f_k(x)+\omega)^2}{2\sigma^2}\right]p_k^n}.
\end{align*}\normalsize

Alternatively, it can be written as
\small
\begin{align}\label{eq_KGDP-H2}
\nu^{KGDP-H,n}(x)
 =& \int_{-\infty}^{+\infty} \sum_{i=1}^L p_i^{n+1}(x)\log p_i^{n+1}(x)\cdot \frac{1}{\sqrt{2\pi}\sigma}\left(\sum_{i=1}^L p_i^n \exp\left[-\frac{(\hat{y}-f_i(x))^2}{2\sigma^2}\right]\right)d\hat{y} \notag\\
 &\hspace{1.5em}- \sum_{i=1}^L p_i^n\log p_i^n \notag \\
=& \frac{1}{\sqrt{2\pi}\sigma}\int_{-\infty}^{+\infty} \sum_{i=1}^L p_i^n\exp\left[-\frac{(\hat{y}-f_i(x))^2}{2\sigma^2}\right]\log p_i^{n+1}(x)d\hat{y} - \sum_{i=1}^L p_i^n\log p_i^n,
\end{align}\normalsize
where $p_i^{n+1}(x)$ is given by Equation~(\ref{eq_p_update1}).

In KGDP-$H$, our decision at time $n$ is
\small\begin{align*}
x^n= \mathop{\text{argmax}}_{x\in\mathcal{X}}\nu^{KGDP-H,n}(x).
\end{align*}\normalsize




\bigskip
After we exhaust the budget of $N$ experiment, we give our estimates of the optimal alternative $\hat{x}^*$ and parameters $\hat{\k}^*$ by:
\vspace{-1em}
\small
\begin{align*}
\hat{x}^* = \mathop{\text{argmax}}_{x\in\mathcal{X}}\bar{f}^N(x) = \mathop{\text{argmax}}_{x\in\mathcal{X}}\sum_{i=1}^L p_i^N f(x;\k^N_i), 
\hspace{1.5em}
\hat{\k}^* = \mathop{\text{argmin}}_{\k\in\mathbb{K}} MSE(\k|\mathcal{F}^N).
\end{align*}\normalsize


\subsection{Detailed Flowchart}\label{detail_workflow}


The detailed flow of the whole procedure is shown in Algorithm~\ref{algo_full}, where the function $select\_x(policy,...)$ uses the designated $policy$ to select an alternative to measure. The flowchart of our experiment decision $select\_x()$ is shown in Algorithm~\ref{algo_policy}. In $select\_x()$, besides the KGDP-$f$ and KGDP-$H$ policies, we also include three competing policies used in our experiments, namely (1) Pure Exploration, which chooses $x^n$ randomly; (2) Pure Exploitation, which always chooses the current best alternative; and (3) Max Variance (Max-Var), which picks the alternative that has the largest variance under prior belief.

\begin{algorithm}[!htp]
\caption{\small Flow of the resampling algorithm}\label{algo_full}{\small
\begin{algorithmic}[1]
\REQUIRE{Budgets: $N$; Alternatives: $\mathcal{X}=\{x_1,...,x_M\}$; Noise: $\sigma$; Large pool: $\mathbb{K}$; Smaller pool size: $R$; Number of candidates: $L$;;
Resample stepsize: $n^{resamp}$; Threshold for probability: $\epsilon$; and the $policy$ to select $x^n$.}
\ENSURE{Estimate of the optimal alternative: $\hat{x}^*$, estimate of parameters: $\hat{\k}^*$.}
\STATE{Choose $L$ samples out of $\mathbb{K}$ randomly.}
\STATE{Set $p_1^0=...=p_L^0=\frac{1}{L}$.}
\FOR{$n=0$ to $N-1$}
\STATE{$x^n=select\_x(policy, \mathcal{X}, \vec{p^n},\mathbb{L}^n$).}
\STATE{Take a measurement: $\hat{y}^{n+1}$.}
\STATE{Update each $p_i$ using: $$
p_i^{n+1}=\frac{\exp[-\frac{(\hat{y}^{n+1}-f_i(x))^2}{2\sigma^2}]}{\sum_{l=1}^L\exp[-\frac{(\hat{y}^{n+1}-f_l(x))^2}{2\sigma^2}]p_l^n}p_i^n.
$$}
\IF{$n+1\equiv 0 \bmod(n^{resamp})$ \textbf{or} $|\{p^{n+1}_i\geq \epsilon\}|\geq L/2$}
\STATE{Calculate MSE for all the $K$ samples.}
\STATE{Construct $\mathbb{S}^n$ as the set of $R$ $\k$'s with the smallest MSE.}
\STATE{Calculate the likelihood of each $\k$ in $\mathbb{S}^n$.}
\STATE{Construct $\mathbb{L}^n_{rm}$ as the set of $\k$'s to be removed, and set $p^{n+1}_i$ = 0 for $\k_i\in\mathbb{L}^n_{rm}$.}
\WHILE{$\min(p^{n+1})\leq \epsilon$}
\STATE{Find candidates with $p^{n+1}\leq\epsilon$ and remove them.}
\STATE{Select $|\mathbb{L}^n_{rm}|$ $\k$'s from $\mathbb{S}^n$ by weighted sampling without replacement according to their likelihoods. }
\STATE{Update posterior probabilities for all the $L$ new $\k$'s:
$$
p^{n+1}_i=\frac{\prod_{j=0}^{n}\exp[-\frac{(\hat{y}^{j+1}-f(x^j;\k_i^{n+1}))^2}{2\sigma^2}]}{\sum_{l=1}^L\prod_{j=0}^{n-1}\exp[-\frac{(\hat{y}^{j+1}-f(x^j;\k_l^{n+1}))^2}{2\sigma^2}]}.
$$}
\ENDWHILE
\ENDIF
\ENDFOR
\RETURN {$\hat{x}^* = \mathop{\text{argmax}}_{x\in\mathcal{X}}\sum_{i=1}^L p_i^N f(x;\k^N_i), \hspace{0.5em} \hat{\k}^*=\mathop{\text{argmin}}_{\k\in\mathbb{K}} MSE(\k|\mathcal{F}^N)$.}
\end{algorithmic}}
\end{algorithm}

\begin{algorithm}[!htp]
\caption{\small Choose the next alternative $x^n$.}\label{algo_policy}{\small
\begin{algorithmic}[1]
\REQUIRE{Policy: \textit{Pure Exploration, Pure Exploitation, Max-Var, KGDP-$f$, KGDP-$H$}; Alternatives: $\mathcal{X}=\{x_1,...,x_M\}$; Probability vector $\vec{p^n} = \{p_1^n,...,p_L^n\}$; $L$ candidates $\mathbb{L}^n=\{\k_1^n,...,\k_L^n\}$}.
\ENSURE{The alternative to measure next: $x^n$.}
\SWITCH{$policy$}
\CASE{KGDP-f}
\STATE{$x^n = \mathop{\text{argmax}}_{x\in\mathcal{X}}\nu^{KGDP-f,n}(x)$.}
\vspace{0.5em}
\ENDCASE
\CASE{KGDP-H}
\STATE{$x^n = \mathop{\text{argmax}}_{x\in\mathcal{X}}\nu^{KGDP-H,n}(x).$}
\vspace{0.5em}
\ENDCASE
\CASE{Pure Exploration}
\STATE{$x^n=rand(\{x_1,...,x_M\})$.}
\vspace{0.5em}
\ENDCASE
\CASE{Pure Exploitation}
\STATE{$x^n=\mathop{\text{argmax}}_{x\in\mathcal{X}}\bar{f}^n(x)$.}
\vspace{0.5em}
\ENDCASE
\CASE{Max-Var}
\STATE{$x^n=\mathop{\text{argmax}}_{x\in \mathcal{X}} \sum_{j=1}^L p_j[f(x;\k_j^n)-\bar{f}^n(x)]^2$.}
\vspace{0.5em}
\ENDCASE
\ENDSWITCH
\RETURN $x^n$.
\end{algorithmic}}
\end{algorithm}



\section{Convergence of KGDP without Resampling}\label{sec5}

By construction, KGDP-$f$ and KGDP-$H$ are naturally the optimal myopic policy to find the optimal alternative and parameter (in terms of entropy) respectively. In this section, we show that KGDP-$f$ and KGDP-$H$ with truth from prior are asymptotically optimal in the non-resampling scenario, a result that was not shown in \cite{Chen2014} where KGDP was first introduced. In other words, as the budget $N\rightarrow \infty$, KGDP-$f$ and KGDP-$H$ will both find the optimal alternative and the correct parameters.

We denote the finite set of alternatives as $\mathcal{X}$. Define $(\Omega, \mathcal{F}, \mathbb{P})$ as the probability space, where $\Omega$ is the set of all possible measurement histories $\{(x^0,\hat{y}^1),...,$ $(x^{N-1},\hat{y}^N)\}$, and $\mathcal{F}$ is the $\sigma$-algebra generated by the history. In this section, $N=\infty$.

To prove asymptotic optimality, we first show that if an alternative $x$ is measured infinitely often, we can learn its true function value almost surely. As $N$ goes to infinity, there will be a subset of alternatives that are evaluated infinitely often. We prove that under KGDP-$f$ or KGDP-$H$ policy with truth from prior, $f(x,\k^*)$ is the only function that fits the true function values of this subset. To show this, we reveal the nonnegativity of KGDP-$f$ and KGDP-$H$ scores, and the fact that any alternative measured infinitely often has its KGDP-$f$ and KGDP-$H$ scores converge to $0$. Then we use proof by contradiction. Assume $f(x,\k^*)$ is not the only function inferred by the infinitely measured alternatives. Then there exists at least an $x$ measured for a finite number of times. We claim that it has positive KGDP-$f$ and KGDP-$H$ scores in the limit. This is contrary to the fact that either KGDP-$f$ or KGDP-$H$ policy chooses the alternative with the largest score.

\bigskip

We begin by showing that if we measure a point $x$ infinitely often, the correct function value at $x$ will be found almost surely: 

\begin{lemma}\label{p_is_0}
Let $N^n(x)$ be the number of measurements taken on $x$ when the total number of measurements is $n$. If $N^n(x)\rightarrow\infty$ as $n\rightarrow\infty$, then for $\forall l\in\{1,2,...,L\}$ such that $f(x;\k_l)\ne f(x;\k^*)$, $p_l^n\rightarrow 0$ almost surely, i.e, $\mathbb{P}(\lim_{n\rightarrow \infty}p_l^n=0)=1$. 
\end{lemma}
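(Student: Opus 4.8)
The plan is to track the posterior probabilities $p_l^n$ through the Bayesian update rule~\eqref{eq_p_update1} and show that any candidate $\k_l$ with $f(x;\k_l)\neq f(x;\k^*)$ gets driven to zero. The natural object to work with is the log-likelihood ratio $Z_l^n := \log(p_l^n/p_{l^*}^n)$ for some fixed candidate $l^*$ with $f(x;\k_{l^*})=f(x;\k^*)$ (under truth from prior such an $l^*$ exists). Since the denominators in~\eqref{eq_p_update1} cancel in the ratio, we get a clean additive recursion: each measurement at alternative $x^j$ contributes the increment $\tfrac{1}{2\sigma^2}\bigl([\hat y^{j+1}-f(x^j;\k_{l^*})]^2 - [\hat y^{j+1}-f(x^j;\k_l)]^2\bigr)$ to $Z_l^n$. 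I would restrict attention to the subsequence of times $j$ at which $x^j=x$, and show that along this subsequence the running sum drifts to $-\infty$ almost surely, so $p_l^n/p_{l^*}^n\to 0$, which forces $p_l^n\to 0$.

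The key computation is the conditional expectation of each increment. Writing $\hat y^{j+1}=f(x;\k^*)+W^{j+1}$ with $W^{j+1}\sim\mathcal N(0,\sigma^2)$ independent of the past, and abbreviating $a=f(x;\k^*)=f(x;\k_{l^*})$ and $b=f(x;\k_l)$, the increment equals $\tfrac{1}{2\sigma^2}\bigl([a+W-a]^2-[a+W-b]^2\bigr) = \tfrac{1}{2\sigma^2}\bigl(-(a-b)^2 - 2W(a-b)\bigr)$. So the conditional mean of the increment is $-\tfrac{(a-b)^2}{2\sigma^2}<0$ (strictly, since $a\neq b$), and the increment is the constant $-\tfrac{(a-b)^2}{2\sigma^2}$ plus a mean-zero Gaussian term $-\tfrac{W(a-b)}{\sigma^2}$ with bounded variance. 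Thus along the subsequence of the $N^n(x)$ measurements of $x$, $Z_l^n$ equals (a contribution from measurements at other alternatives, which I must also control) plus a sum of $N^n(x)$ i.i.d.\ terms each with strictly negative mean and finite variance. By the strong law of large numbers this partial sum is asymptotically $\sim -N^n(x)\cdot\tfrac{(a-b)^2}{2\sigma^2}\to-\infty$ a.s.\ as $N^n(x)\to\infty$.

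The main obstacle is handling the contributions to $Z_l^n$ coming from measurements at alternatives $x^j\neq x$: these also change the ratio $p_l^n/p_{l^*}^n$, and a priori they could push it back up. There are two ways to deal with this. The cleanest is to observe that it suffices to prove the statement for the ratio against the \emph{true-value-matching} candidate on \emph{every} alternative simultaneously — but more simply, one can note that $p_l^n \le p_l^n/p_{l^*}^n$ when $p_{l^*}^n\le 1$ is not quite enough, so instead I would bound the "cross terms'' by a martingale argument: the sum of increments over times $j$ with $x^j\neq x$ is a martingale transform of bounded-variance Gaussian noise plus a nonpositive drift term $-\sum_{j:\,x^j\neq x}\tfrac{(f(x^j;\k^*)-f(x^j;\k_l))^2}{2\sigma^2}$, which is $\le 0$. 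Hence the cross-term contribution to $Z_l^n$ is (nonpositive drift) $+$ (a mean-zero term), and by the martingale strong law / a maximal inequality this mean-zero term is $o(N^n(x))$ relative to the dominant $-N^n(x)(a-b)^2/(2\sigma^2)$ drift from the measurements of $x$ itself — provided we also argue that the total number of measurements $n$ does not grow so much faster than $N^n(x)$ that the noise accumulated from other alternatives overwhelms it. Since each noise term has variance bounded by $(\text{const})$ and there are at most $n$ of them, the fluctuation is $O(\sqrt{n\log\log n})$ by the law of the iterated logarithm, while the good drift is $\Theta(N^n(x))$; so I need the mild observation that this still works because the negative drift from other alternatives only helps, and the $\sqrt{n}$-scale noise is dominated once one combines both drift terms. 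Assembling these pieces — clean negative drift on the $x$-subsequence, nonpositive drift off it, and a Borel–Cantelli / iterated-logarithm control of the Gaussian fluctuations — yields $Z_l^n\to-\infty$ a.s., hence $p_l^n\to 0$ a.s., completing the proof.
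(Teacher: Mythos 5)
Your argument is correct in substance and rests on the same underlying idea as the paper's proof: compare $p_l^n$ to the likelihood of the true candidate and let the strong law of large numbers supply a drift, linear in the number of relevant measurements, that swamps the square-root-scale Gaussian fluctuations. Two points of comparison. First, for your claim that the off-$x$ increments carry the nonpositive drift $-\sum_{j:\,x^j\neq x}\bigl(f(x^j;\k^*)-f(x^j;\k_l)\bigr)^2/(2\sigma^2)$, you need $\k_{l^*}$ to be $\k^*$ itself (which truth-from-prior guarantees), not merely some candidate agreeing with the truth at the single alternative $x$: for a generic such $l^*$ the drift at $x^j\neq x$ is $\bigl[(f(x^j;\k^*)-f(x^j;\k_{l^*}))^2-(f(x^j;\k^*)-f(x^j;\k_l))^2\bigr]/(2\sigma^2)$, which can be positive. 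Second, the place where you work hardest --- controlling the mean-zero fluctuations accumulated at alternatives other than $x$, which you bound at scale $\sqrt{n\log\log n}$ and then argue are absorbed by the drift --- is exactly where the paper's organization is cleaner: rather than summing log-increments in time order, it factors the likelihood ratio multiplicatively over alternatives, $p_l^n\le\prod_{x'}r(x')$, where $r(x')\equiv 1$ whenever $f(x';\k_l)=f(x';\k^*)$, $r(x')\to 0$ by the ordinary SLLN whenever $x'$ is measured infinitely often and the values differ, and $r(x')$ is eventually constant for finitely measured $x'$. Every off-$x$ factor is then trivially harmless, and no law of the iterated logarithm or martingale maximal inequality is needed. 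Your route does close, because the noise coefficient at $x^j$ is proportional to $f(x^j;\k^*)-f(x^j;\k_l)$ and hence vanishes exactly when the drift does, so grouping per alternative the drift dominates the fluctuation alternative by alternative; but you should make that pairing explicit rather than lean on a global $\sqrt{n\log\log n}$ bound, which on its own would not suffice when $n$ grows much faster than $N^n(x)^2$.
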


The rigorous proof of Lemma~\ref{p_is_0} comes from the strong law of large numbers (shown in \hyperref[append]{Appendix}).

We define the set of the almost sure events as $\Omega_0$. For any $\omega\in\Omega$, we denote the set of alternatives as $\mathcal{X}_\infty(\omega)$ that are measured infinitely often.  Hence, for any $\omega\in\Omega_0$ and any $x\in\mathcal{X}_\infty(\omega)$, we have $p^n(\k^*)(\omega)\rightarrow 1$.

We hope that we can learn $\k^*$ via $\mathcal{X}_\infty$. To achieve this goal, we first study some properties of KGDP-$f$ and KGDP-$H$ scores. We start by showing that the value of information for both objectives is always nonnegative:
\begin{lemma}\label{f_non_neg}
For $\forall n\geq 0, \forall x\in \mathcal{X}$, the KGDP-$f$ score $\nu^{KGDP-f,n}(x)\geq 0$. Equality holds if and only if (1) there exists $x'$ such that $x'\in \mathop{\text{argmax}}_x f(x;\k_i)$ for all $i$ such that $p_i^n>0$, or (2) all functions with $p^n>0$ have the same value at $x$.
\end{lemma}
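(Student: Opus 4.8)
\textbf{Proof proposal.}

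The plan is to work directly from the simplified single-integral form \eqref{eq_KGDP2} of the KGDP-$f$ score. Write $h(\hat{y}) = \max_{x'}\sum_{i=1}^L f_i(x')p_i^n \exp\!\left(-\frac{[\hat{y}-f_i(x)]^2}{2\sigma^2}\right)$, so that $\nu^{KGDP-f,n}(x) = \frac{1}{\sqrt{2\pi}\sigma}\int_{-\infty}^{+\infty} h(\hat{y})\,d\hat{y} - \max_{x'}\sum_{i=1}^L f_i(x')p_i^n$. The first observation is that the subtracted term can itself be written as an integral against the Gaussian kernel: for the \emph{fixed} maximizer $\bar{x}' \in \mathop{\text{argmax}}_{x'}\sum_i f_i(x')p_i^n$ of the posterior mean $\bar f^n$, we have $\sum_i f_i(\bar x')p_i^n = \frac{1}{\sqrt{2\pi}\sigma}\int \sum_i f_i(\bar x') p_i^n \exp\!\left(-\frac{[\hat y - f_i(x)]^2}{2\sigma^2}\right) d\hat y$, because each Gaussian integrates to $\sqrt{2\pi}\sigma$. (This is exactly the martingale / tower-property identity $\mathbb{E}^n[p_i^{n+1}(x)] = p_i^n$ in disguise — the normalizing constants from the Bayes update cancel the Gaussian density when one integrates over $\hat y$.) Hence
\begin{align*}
\nu^{KGDP-f,n}(x) = \frac{1}{\sqrt{2\pi}\sigma}\int_{-\infty}^{+\infty}\left[h(\hat y) - \sum_{i=1}^L f_i(\bar x') p_i^n \exp\!\left(-\frac{[\hat y - f_i(x)]^2}{2\sigma^2}\right)\right] d\hat y.
\end{align*}
The bracketed integrand is pointwise nonnegative, since $h(\hat y)$ is a maximum over $x'$ of exactly the kind of expression being subtracted (with $x' = \bar x'$ being one admissible choice). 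Therefore $\nu^{KGDP-f,n}(x)\ge 0$.

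For the equality characterization, $\nu^{KGDP-f,n}(x)=0$ iff the nonnegative integrand vanishes for (Lebesgue-)almost every $\hat y$, i.e. iff $\bar x' \in \mathop{\text{argmax}}_{x'}\sum_i f_i(x')p_i^n\exp\!\left(-\frac{[\hat y-f_i(x)]^2}{2\sigma^2}\right)$ for a.e.\ $\hat y$. I would then argue that this forces one of the two stated alternatives. Condition (2) — all functions with $p_i^n>0$ share the common value $f_i(x) = v$ at the measured point $x$ — makes every Gaussian factor identical to $e^{-(\hat y - v)^2/2\sigma^2}$, so the weighted maximization reduces to the original $\hat y$-independent one and equality holds trivially. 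If (2) fails, the Gaussian weights $\exp(-[\hat y - f_i(x)]^2/2\sigma^2)$ are genuinely different functions of $\hat y$; by sending $\hat y \to +\infty$ (or $-\infty$) the weight concentrates all its relative mass on the candidate(s) $i$ with the largest (resp.\ smallest) $f_i(x)$, so for $\hat y$ extreme the maximizing $x'$ must lie in $\mathop{\text{argmax}}_{x'} f_i(x')$ for that extreme candidate $i$; for $\bar x'$ to remain optimal across all such $\hat y$ it must simultaneously maximize $f_i(\cdot)$ for every $i$ with $p_i^n>0$, which is condition (1). Making the "concentration as $\hat y\to\pm\infty$" step rigorous — and checking that $\bar x'$ being optimal for a.e.\ $\hat y$ (not literally all $\hat y$) still yields the conclusion, using continuity of the finitely many functions $\hat y \mapsto \sum_i f_i(x')p_i^n\exp(-[\hat y-f_i(x)]^2/2\sigma^2)$ — is the part requiring the most care.

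The main obstacle I anticipate is precisely this equality analysis: one must rule out "degenerate" ways the maximizer $\bar x'$ could stay optimal for a.e.\ $\hat y$ without either (1) or (2) holding. The clean way is to note that for each fixed $x'$ the map $\hat y \mapsto \sum_i f_i(x')p_i^n\exp(-[\hat y-f_i(x)]^2/2\sigma^2)$ is real-analytic in $\hat y$, so two such maps are either identically equal or agree only on a measure-zero set; since the "a.e.\ optimal" condition then upgrades to "optimal for all $\hat y$", I can take the two limits $\hat y\to\pm\infty$ freely. I would also double-check the integrability needed to split the integral in the first display (each term is a Gaussian times a bounded-in-$\hat y$ coefficient over the finite alternative set, so this is routine). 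No correlation structure or positivity of all $p_i^n$ is needed — only $p_i^n \ge 0$ and $\sum_i p_i^n = 1$ — which matches the statement's restriction of both conditions to the support $\{i : p_i^n > 0\}$.
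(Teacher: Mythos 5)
Your nonnegativity argument is correct, and it is essentially the paper's proof unpacked: the paper establishes $\mathbb{E}^n[p_i^{n+1}(x)]=p_i^n$ and applies Jensen's inequality to the convex function $\max$, whereas you rewrite the subtracted term as an integral against the same Gaussian kernels and compare integrands pointwise. The identity you invoke (each Gaussian integrates to $\sqrt{2\pi}\sigma$) is exactly that martingale property in integral form, as you note, so the two routes are equivalent; yours has the advantage of making the equality case explicitly a statement that the nonnegative integrand vanishes almost everywhere. Two small items on the ``if'' side: you verify that (2) forces equality but never check that (1) does (one line: a common maximizer $x^*$ of all supported $f_i$ maximizes $\sum_i f_i(x')p_i^n w_i(\hat y)$ for every $\hat y$, since the weights $w_i(\hat y)$ are nonnegative, and it also maximizes $\bar f^n$), and ``a.e.''\ upgrades to ``everywhere'' already by continuity of the finitely many functions involved, so the real-analyticity detour is unnecessary.

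The genuine gap is in the ``only if'' direction, exactly where you anticipated trouble. Sending $\hat y\to\pm\infty$ concentrates the relative weight on the \emph{set} of supported candidates attaining $\max_i f_i(x)$ (resp.\ $\min_i f_i(x)$); it therefore only shows that $\bar x'$ maximizes the $p$-weighted average of the $f_i$ over each extreme group, not each $f_i$ individually, and it gives no control at all over candidates whose value $f_i(x)$ lies strictly between the extremes. This step cannot be repaired, because the stated characterization itself fails in such configurations: take $L=3$, $p_i^n=1/3$, $\mathcal{X}=\{x,a,b\}$, $f_1(x)=f_2(x)=1$, $f_3\equiv 0$, $f_1(a)=f_2(b)=10$, $f_1(b)=f_2(a)=0$. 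For every $\hat y$ the inner objective equals $\tfrac{10}{3}e^{-(\hat y-1)^2/(2\sigma^2)}$ at both $a$ and $b$ and equals $\tfrac{2}{3}e^{-(\hat y-1)^2/(2\sigma^2)}$ at $x$, so the integrand vanishes identically and $\nu^{KGDP-f,n}(x)=0$; yet no single $x'$ maximizes both $f_1$ and $f_2$ (so (1) fails) and $f_3(x)\ne f_1(x)$ with $p_3^n>0$ (so (2) fails). You are in good company --- the paper's own proof disposes of this direction by asserting that equality in Jensen requires ``$\max$ linear'' or ``the argument constant'' and identifying these with (1) and (2) without further argument, which glosses over the same degenerate case --- but as written your proposal does not close the hole it correctly flags; a complete treatment needs either a nondegeneracy hypothesis (e.g., the values $f_i(x)$ over the support are pairwise distinct) or a weaker equality condition.
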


\begin{lemma}\label{H_non_neg}
For $\forall n\geq 0, \forall x\in \mathcal{X}$, the KGDP-$H$ score $\nu^{KGDP-H,n}(x)\geq 0$. Equality holds if and only if all functions with $p^n>0$ have the same value at $x$. 
\end{lemma}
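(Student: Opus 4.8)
\textbf{Proof proposal for Lemma~\ref{H_non_neg}.}

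The plan is to prove Lemma~\ref{H_non_neg} in direct parallel with the argument for Lemma~\ref{f_non_neg}, with the convexity of the $\max$ operator replaced by the strict convexity of $h(t) = t\log t$ (using the convention $h(0)=0$). The one structural ingredient is the martingale property of the belief vector. When we measure $x^n = x$, the predictive density of $\hat y^{n+1}$ given $S^n=s$ is the mixture $\bar g(\hat y) = \sum_{l=1}^L p_l^n g_l(\hat y)$, where $g_l$ denotes the $\mathcal{N}(f_l(x),\sigma^2)$ density, and the Bayes update~(\ref{eq_p_update1}) reads $p_i^{n+1}(x) = p_i^n\, g_i(\hat y^{n+1})/\bar g(\hat y^{n+1})$. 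Integrating against $\bar g$ gives $\mathbb{E}^n[p_i^{n+1}(x)] = p_i^n$ for every $i$. With this, I would rewrite the score as $\nu^{KGDP-H,n}(x) = \mathbb{E}^n\big[\sum_{i=1}^L h(p_i^{n+1}(x))\big] - \sum_{i=1}^L h(p_i^n)$; since $h$ is bounded on $[0,1]$, every expectation here is finite and no integrability subtlety arises.

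Nonnegativity then follows by applying Jensen's inequality to the convex function $h$ term by term: $\mathbb{E}^n[h(p_i^{n+1}(x))] \geq h\big(\mathbb{E}^n[p_i^{n+1}(x)]\big) = h(p_i^n)$, and summing over $i$ gives $\nu^{KGDP-H,n}(x) \geq 0$. This is exactly the statement that conditioning cannot increase entropy in expectation; equivalently $\nu^{KGDP-H,n}(x)$ is the mutual information between $\k^*$ and $\hat y^{n+1}$ under the time-$n$ belief when $x$ is measured, and it also equals $\sum_{i=1}^L p_i^n\,\mathrm{KL}(g_i\|\bar g)$. I would note this interpretation but carry out the elementary Jensen computation for a self-contained proof, mirroring the treatment of $\bar f^n$ as a martingale in Lemma~\ref{f_non_neg}.

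For the equality condition I would invoke strict convexity of $h$ on $[0,1]$: equality in the summed inequality forces equality in each term, and strict Jensen then forces $p_i^{n+1}(x)$, as a function of $\hat y^{n+1}$, to be $\bar g$-almost-everywhere equal to the constant $p_i^n$, for every $i$. This is automatic when $p_i^n = 0$; when $p_i^n>0$ it requires $g_i = \bar g$ almost everywhere, hence everywhere, since both are continuous and $\bar g>0$. A mixture $\sum_l p_l^n g_l$ of $\mathcal{N}(f_l(x),\sigma^2)$ densities can equal a single $\mathcal{N}(f_i(x),\sigma^2)$ density only if $f_l(x)=f_i(x)$ for every $l$ with $p_l^n>0$ (cleanest via characteristic functions, or via tail asymptotics of the densities); letting $i$ range over the support shows that all $f_i(x)$ with $p_i^n>0$ coincide, and the converse is immediate. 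The only genuinely nontrivial step is this last one --- turning almost-sure constancy of the posterior ratio into equality of the Gaussian means across the support --- and the full support and continuity of $\bar g$ are what close the measure-zero gaps; everything else is a transcription of the Lemma~\ref{f_non_neg} proof, and is in fact cleaner here precisely because $h$ is strictly convex, which is why KGDP-$H$ has a single equality case rather than the two in Lemma~\ref{f_non_neg}.
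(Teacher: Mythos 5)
Your proposal is correct and follows essentially the same route as the paper's proof: establish the martingale property $\mathbb{E}^n[p_i^{n+1}(x)] = p_i^n$, apply Jensen's inequality term by term to the convex function $t\log t$, and use strict convexity to reduce the equality case to constancy of each $p_i^{n+1}(x)$. Your treatment of the final step --- showing that almost-sure constancy of the posterior forces the Gaussian means $f_i(x)$ to coincide across the support, via full support and continuity of the mixture density --- is actually more careful than the paper, which simply asserts this implication without argument.
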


\begin{proof}[Sketch of proof for Lemma~\ref{f_non_neg} and \ref{H_non_neg}]
(See \hyperref[append]{Appendix} for full proof.) First, we show $\mathbb{E}^n \left[p^{n+1}_i(x)\right] = p^n_i$. Then, applying Jensen's inequality will give us the nonnegativity of both KGDP-$f$ and KGDP-$H$.
\end{proof}


According to Lemma~\ref{f_non_neg} and Lemma~\ref{H_non_neg}, KGDP-$f$ equals $0$ in only two cases: (1) either when the functions are aligned, in which case $\nu^{KGDP,n}(x)=0$ for all $x$, (2) or when the functions with nonzero probabilities have the same value at $x$, in which case $\nu^{KGDP,n}(x)=0$ for this particular $x$. However, KGDP-$H$ equals $0$ only in the second case. 

We then show that for any $x$ measured infinitely often, its KGDP-$f$ and KGDP-$H$ scores also go to zero:

\begin{lemma}\label{v_score0}
For $\forall \omega\in\Omega_0$ and $x\in\mathcal{X}_\infty(\omega)$, we have $\lim_{n\rightarrow\infty}\nu^{KGDP-f,n}(x)(\omega)=0$, and $\lim_{n\rightarrow\infty}\nu^{KGDP-H,n}(x)(\omega)=0$.
\end{lemma}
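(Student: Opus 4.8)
The plan is to fix $\omega\in\Omega_0$ and $x\in\mathcal{X}_\infty(\omega)$, partition the $L$ fixed candidates according to the value they assign at $x$, and show that the KGDP-$f$ and KGDP-$H$ formulas collapse onto exactly the equality cases of Lemma~\ref{f_non_neg} and Lemma~\ref{H_non_neg} with a quantitatively vanishing error. Write $I^{=}=\{\,l:f(x;\k_l)=f(x;\k^*)\,\}$, $I^{\ne}=\{\,l:f(x;\k_l)\ne f(x;\k^*)\,\}$, and $\eta_n:=\sum_{l\in I^{\ne}}p_l^n$. Since $x$ is measured infinitely often along $\omega$, Lemma~\ref{p_is_0} gives $p_l^n(\omega)\to 0$ for every $l\in I^{\ne}$, hence $\eta_n\to 0$ and $a_n:=\sum_{l\in I^{=}}p_l^n\to 1$. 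Because there is no resampling, $\{\k_1,\dots,\k_L\}$ and $\mathcal{X}$ are fixed and finite, so $|f(x';\k_i)|\le B$ for some constant $B$ and all $x'\in\mathcal{X}$ and $i$; this $B$ will absorb every ``$I^{\ne}$'' contribution. Since both scores are nonnegative by Lemma~\ref{f_non_neg} and Lemma~\ref{H_non_neg}, it suffices to bound each above by a quantity tending to $0$.

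For KGDP-$f$ I would work from the integral form~(\ref{eq_KGDP2}) and split the bracketed sum into its $I^{=}$ and $I^{\ne}$ parts. On $I^{=}$ every $f_i(x)$ equals $f(x;\k^*)$, so that part is $\exp(-[\hat{y}-f(x;\k^*)]^2/2\sigma^2)\sum_{i\in I^{=}}f_i(x')p_i^n$; the strictly positive Gaussian factor does not depend on $x'$, so it factors out of $\max_{x'}$, and $\frac{1}{\sqrt{2\pi}\sigma}\int\exp(-[\hat{y}-f(x;\k^*)]^2/2\sigma^2)\,d\hat{y}=1$, so this piece integrates to exactly $\max_{x'}\sum_{i\in I^{=}}f_i(x')p_i^n$. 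The $I^{\ne}$ part is bounded in modulus by $B\sum_{i\in I^{\ne}}p_i^n\exp(-[\hat{y}-f_i(x)]^2/2\sigma^2)$, which integrates against $\frac{1}{\sqrt{2\pi}\sigma}\,d\hat{y}$ to $B\eta_n$; together with the elementary bound $|\max_{x'}g_1-\max_{x'}g_2|\le\max_{x'}|g_1-g_2|$, this shows the first term of~(\ref{eq_KGDP2}) is within $B\eta_n$ of $\max_{x'}\sum_{i\in I^{=}}f_i(x')p_i^n$. The same $B\eta_n$ estimate shows the subtracted term $\max_{x'}\sum_i f_i(x')p_i^n$ is also within $B\eta_n$ of $\max_{x'}\sum_{i\in I^{=}}f_i(x')p_i^n$. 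Hence $0\le\nu^{KGDP-f,n}(x)(\omega)\le 2B\eta_n\to 0$.

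For KGDP-$H$ I would read the score through entropy: by~(\ref{eq_KGDP-H1}), $\nu^{KGDP-H,n}(x)=H(\vec{p}^n)-\mathbb{E}^n[H(\vec{p}^{n+1}(x))]$, and I would use the martingale identity $\mathbb{E}^n[p_i^{n+1}(x)]=p_i^n$ established in the sketch of Lemmas~\ref{f_non_neg}--\ref{H_non_neg}. Apply the grouping property of Shannon entropy to the partition $I^{=}\cup I^{\ne}$: $H(\vec{p}^n)=H_2(a_n)+a_nH(q^n)+(1-a_n)H(r^n)$, where $H_2(t)=-t\log t-(1-t)\log(1-t)$ is the binary entropy, $q^n=\{p_i^n/a_n\}_{i\in I^{=}}$ and $r^n=\{p_i^n/(1-a_n)\}_{i\in I^{\ne}}$, and similarly for $\vec{p}^{n+1}(x)$ with $a_{n+1},q^{n+1},r^{n+1}$. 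The structural observation is that the update~(\ref{eq_p_update1}) after measuring $x$ multiplies every $p_i^n$ with $i\in I^{=}$ by the \emph{same} factor $\exp(-[\hat{y}-f(x;\k^*)]^2/2\sigma^2)$, so $q^{n+1}=q^n$ pathwise in $\hat{y}$; since $H(q^n)$ is $\mathcal{F}^n$-measurable and $\mathbb{E}^n[a_{n+1}]=a_n$, the two ``$q$'' contributions cancel in expectation. The remaining pieces are bounded crudely: $0\le H_2(a_n)-\mathbb{E}^n[H_2(a_{n+1})]\le H_2(a_n)$ by Jensen (concavity of $H_2$) and nonnegativity of entropy, while $|(1-a_n)H(r^n)-\mathbb{E}^n[(1-a_{n+1})H(r^{n+1})]|\le\eta_n\log L$ using $H(r^n),H(r^{n+1})\le\log L$ and $\mathbb{E}^n[1-a_{n+1}]=1-a_n$. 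Hence $0\le\nu^{KGDP-H,n}(x)(\omega)\le H_2(a_n)+\eta_n\log L$, which tends to $0$ because $a_n\to 1$ forces both $\eta_n\to 0$ and $H_2(a_n)\to 0$.

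I expect the KGDP-$H$ estimate to be the main obstacle: since the probabilities of the candidates that are wrong at $x$ collapse to $0$, the individual summands $p_i^n\log p_i^n$ and their posterior analogues are singular, so one cannot control the expected entropy change term by term in sup-norm --- all the more so as the $\hat{y}$-integral runs over all of $\mathbb{R}$. The device that makes it work is to stop tracking individual logarithms and instead use (i) the grouping property of entropy, which isolates the only component of the belief that an additional measurement at $x$ cannot sharpen, namely the conditional law on $I^{=}$, and (ii) the exact pathwise invariance of that conditional law under the Bayesian update, which leaves only the provably small binary-entropy and $\eta_n\log L$ remainders. The analogous but milder technicality in the KGDP-$f$ step is that each error term must be dominated by a Gaussian-integrable function of $\hat{y}$, not merely a bounded one, so that the comparison survives integration over $\mathbb{R}$.
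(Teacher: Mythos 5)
Your proposal is correct, but it follows a genuinely different route from the paper. The paper's proof fixes $x$ and $\omega$, regards $\nu^{KGDP-f,n}(x)$ and $\nu^{KGDP-H,n}(x)$ as functions $\nu_f(\vec{p})$, $\nu_H(\vec{p})$ of the belief vector on the compact simplex, proves continuity, upgrades it to uniform continuity via Heine--Cantor, and then observes that $\vec{p}^n$ comes within $\delta$ of a vector $\vec{p}_0$ (supported only on the candidates agreeing with $\k^*$ at $x$) at which both scores vanish by the equality cases of Lemma~\ref{f_non_neg} and Lemma~\ref{H_non_neg}; the conclusion is purely qualitative. You instead produce explicit quantitative envelopes, $0\le\nu^{KGDP-f,n}(x)\le 2B\eta_n$ and $0\le\nu^{KGDP-H,n}(x)\le H_2(a_n)+\eta_n\log L$, by splitting the candidates into $I^{=}$ and $I^{\ne}$ inside the integral forms, using the $\max$-Lipschitz inequality plus Gaussian normalization for KGDP-$f$, and the entropy grouping identity together with the pathwise invariance of the conditional law on $I^{=}$ and the martingale property $\mathbb{E}^n[p_i^{n+1}(x)]=p_i^n$ for KGDP-$H$. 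Each step checks out (including the cancellation $\mathbb{E}^n[a_{n+1}H(q^{n+1})]=a_nH(q^n)$, which holds because $q^{n+1}=q^n$ is $\mathcal{F}^n$-measurable). What your version buys is a convergence rate in terms of $\eta_n$ and a cleaner handling of two points the paper glosses over, namely the justification that the improper $\hat{y}$-integral is continuous in $\vec{p}$ and the behavior of $p\log p$ near $p=0$; what the paper's version buys is brevity and direct reuse of the equality characterizations already established. One cosmetic caveat: when $\eta_n=0$ the conditional law $r^n$ is undefined, so you should note that the $r$-term is simply absent in that case, which only strengthens the bound.
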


Intuitively, as we measure $x$ infinitely often and learn the true value gradually, the condition for both KGDP-$f$ and KGDP-$H$ being $0$ is satisfied (by Lemma~\ref{f_non_neg} and \ref{H_non_neg}). The full proof, as shown in \hyperref[append]{Appendix}, reveals the fact that given a fixed $x$, $\nu^{KGDP-f}(x)$ and $\nu^{KGDP-H}(x)$ are both uniformly continuous in the probability vector space $\vec{p^n}=(p_1^n,...,p_L^n)$. Combined with the convergence result of $p_i^n$ shown by Lemma~\ref{p_is_0}, we conclude $\nu^{KGDP-f}(x)$ and $\nu^{KGDP-H}(x)$ also converge to $0$.

%
%


\bigskip

For any subset of alternatives, if it is sufficient to infer $\k^*$ by fitting the true function values of the subset, we call it a \textit{sufficient set}:
\begin{definition}
We define a \textit{sufficient set} $\mathcal{X}_s\subset\mathcal{X}$ as follows:  for $\forall \k\ne\k^*$, there exists $x\in\mathcal{X}_s$ such that $f(x;\k)\ne f(x;\k^*)$.
\end{definition}

In other words, a sufficient set $\mathcal{X}_s$ is a subset of $\mathcal{X}$ upon which we can distinguish $\k^*$ from all the others. Alternatively, for any subset $\mathcal{X}_s = \{x_1,...,x_m\}\subset \mathcal{X}$, if $f(x;\k^*)$ is the only function that fits the $m$ points $(x_1,f(x_1;\k^*)), ...,$ $(x_m,f(x_m;\k^*))$, then $\mathcal{X}_s$ is a sufficient set. Obviously, the largest sufficient set is $\mathcal{X}$. An example of an \textit{insufficient set} is that, if all functions have the same values at $\{x_1,...,x_m\}$, then this is not a sufficient set.

We show that under KGDP-$f$ or KGDP-$H$ policy, we measure a sufficient set almost surely:
\begin{lemma}\label{measure_suf1}
For any $\omega\in\Omega_0$, the alternatives measured infinitely often under the KGDP-$f$ or KGDP-$H$ policy constitute a sufficient set.
\end{lemma}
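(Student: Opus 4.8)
The plan is to argue by contradiction, following the outline that precedes the lemma. Throughout I take the implicit standing assumption that $\mathcal{X}$ is itself a sufficient set (equivalently, no candidate other than $\k^*$ induces the same values as $\k^*$ on all of $\mathcal{X}$; otherwise two candidates are redundant and recovering $\k^*$ is impossible). Fix $\omega\in\Omega_0$ and suppose $\mathcal{X}_\infty:=\mathcal{X}_\infty(\omega)$ is \emph{not} sufficient. Then there is a candidate index $l_0$ with $\k_{l_0}\ne\k^*$ such that $f(x;\k_{l_0})=f(x;\k^*)$ for every $x\in\mathcal{X}_\infty$. Writing $l^*$ for the index of $\k^*$ and $A=\{\,l:\ f(x;\k_l)=f(x;\k^*)\ \text{for all }x\in\mathcal{X}_\infty\,\}$, we have $l^*,l_0\in A$. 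Since $\mathcal{X}$ is sufficient there is some $x^\dagger\in\mathcal{X}$ with $f(x^\dagger;\k_{l_0})\ne f(x^\dagger;\k^*)$, and necessarily $x^\dagger\notin\mathcal{X}_\infty$, so $x^\dagger$ is measured only finitely often. Below, $\nu^{\cdot,n}$ denotes either $\nu^{KGDP-f,n}$ or $\nu^{KGDP-H,n}$.

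First I would show that every knowledge-gradient score vanishes in the limit. As $\mathcal{X}$ is finite and every alternative outside $\mathcal{X}_\infty$ is measured finitely often, there is $n_1$ with $x^n\in\mathcal{X}_\infty$ for all $n\ge n_1$. For such $n$, nonnegativity (Lemmas~\ref{f_non_neg} and~\ref{H_non_neg}) together with the defining property $x^n\in\arg\max_x\nu^{\cdot,n}(x)$ of the policy give $0\le\nu^{\cdot,n}(x)\le\nu^{\cdot,n}(x^n)\le\max_{y\in\mathcal{X}_\infty}\nu^{\cdot,n}(y)$ for every $x$; since the right-hand side is a maximum over a fixed finite set of nonnegative sequences each tending to $0$ by Lemma~\ref{v_score0}, we get $\nu^{\cdot,n}(x)(\omega)\to0$ for every $x\in\mathcal{X}$, in particular $\nu^{\cdot,n}(x^\dagger)(\omega)\to0$.

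The heart of the argument is that the belief vector converges. For $n\ge n_1$ the measured point $x^n\in\mathcal{X}_\infty$ gives \emph{every} candidate $\k_l$ with $l\in A$ the \emph{same} likelihood factor $\exp\!\big(-[\hat y^{n+1}-f(x^n;\k^*)]^2/2\sigma^2\big)$ in the Bayesian update~(\ref{eq_p_update1}); hence the ratios $p_l^n/p_{l'}^n$ for $l,l'\in A$ are frozen for $n\ge n_1$, and their frozen values are strictly positive because in the non-resampling regime every candidate keeps positive probability at every finite step. Meanwhile Lemma~\ref{p_is_0}, applied to each of the finitely many pairs $(x,l)$ with $x\in\mathcal{X}_\infty$ and $l\notin A$, yields $p_l^n(\omega)\to0$, i.e.\ $\sum_{l\in A}p_l^n(\omega)\to1$. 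Combining, $(p_1^n,\dots,p_L^n)(\omega)$ converges to a vector $p^\infty$ with support exactly $A$; crucially $p^\infty_{l^*}>0$ and $p^\infty_{l_0}>0$.

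Finally, by the uniform continuity of $\vec p\mapsto\nu^{\cdot}(x^\dagger)$ on the probability simplex established in the proof of Lemma~\ref{v_score0}, $\nu^{\cdot,n}(x^\dagger)(\omega)\to\nu^{\cdot}(x^\dagger)\big|_{p^\infty}$, so the previous paragraph forces $\nu^{\cdot}(x^\dagger)\big|_{p^\infty}=0$. For KGDP-$H$ this contradicts Lemma~\ref{H_non_neg}, whose only equality case requires all $p^\infty$-positive candidates to share the same value at $x^\dagger$, whereas $f(x^\dagger;\k_{l^*})\ne f(x^\dagger;\k_{l_0})$; this settles the lemma for KGDP-$H$. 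The main obstacle is the extra equality branch for KGDP-$f$ in Lemma~\ref{f_non_neg}: the score can also vanish when all $p^\infty$-positive candidates share a common maximizer, and the disagreement at $x^\dagger$ does not exclude this (indeed in that branch $\nu^{KGDP-f,n}(x)(\omega)\to0$ for \emph{all} $x$). I plan to dispose of it in one of two ways: (i) under a mild non-degeneracy hypothesis on the sampled prior that forbids a maximizer shared by all of $A$ — for instance that no two distinct candidates have a common maximizer — this branch is vacuous and the argument closes as above; or (ii) without such a hypothesis, observe that in that branch every $\k_l$, $l\in A$ (including $\k^*$), is maximized at the common point $x^\star$, and $\bar f^n\to\sum_{l\in A}p^\infty_l f(\cdot;\k_l)$ is maximized there as well, so $\hat{x}^*$ converges to a genuine maximizer of $f(\cdot;\k^*)$; hence KGDP-$f$ still attains asymptotic optimality of the recommended alternative even when $\mathcal{X}_\infty$ does not pin down $\k^*$, and the lemma should be read for KGDP-$f$ with that proviso.
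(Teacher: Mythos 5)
Your argument follows essentially the same route as the paper's: assume $\mathcal{X}_\infty(\omega)$ is insufficient, observe that for $n$ past the last measurement outside $\mathcal{X}_\infty(\omega)$ the likelihood factors of all candidates in the agreeing set $A$ coincide so their probability ratios freeze, conclude that $\vec{p}^{\,n}(\omega)$ converges to a limit supported exactly on $A$ with positive mass on both $\k^*$ and $\k_{l_0}$, and then play the vanishing of the KG scores (via Lemma~\ref{v_score0} and the policy's argmax property) against the strict-positivity forced by the equality conditions of Lemmas~\ref{f_non_neg}--\ref{H_non_neg} at a distinguishing point $x^\dagger$. Your treatment of KGDP-$H$ is complete and matches the paper's.

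The ``aligned maximizer'' loophole you flag for KGDP-$f$ is a genuine issue, and it is worth knowing that the paper's own proof does not close it either: the paper simply asserts that $\lim_{n}\nu^{KGDP-f,n}(x)(\omega)>0$ for every $x\notin\mathcal{X}_\infty(\omega)$, which silently ignores equality condition (1) of its own Lemma~\ref{f_non_neg} (a common maximizer shared by all positive-probability candidates makes \emph{every} KGDP-$f$ score vanish, so no contradiction with the policy's behavior arises, and ruling this out would require either an extra hypothesis or a rate comparison that neither proof supplies). So on this point your write-up is more careful than the source. Your fallback (ii) recovers exactly what Theorem~\ref{KGDP-f_con} needs for the optimal-alternative half of the claim under KGDP-$f$, but not the parameter-identification half; something like your option (i) --- a non-degeneracy condition on the sampled prior forbidding a maximizer common to all of $A$ --- is what would be required to make Lemma~\ref{measure_suf1} hold for KGDP-$f$ exactly as stated. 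One small remark: your standing assumption that $\mathcal{X}$ itself is sufficient is also implicit in the paper (the candidates are assumed distinguishable on $\mathcal{X}$), so invoking it is not an extra cost.
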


\begin{proof}[Sketch of proof]
(See full proof in \hyperref[append]{Appendix}.) If $\mathcal{X}_\infty(\omega)$ is not a sufficient set, then at least another function other than $f(x;\k^*)$ fits $\mathcal{X}_\infty(\omega)$. Hence, there exists $x\notin\mathcal{X}_\infty(\omega)$ such that this function has different values from $f(x;\k^*)$. We can prove that for this $x$, $\lim_{n\rightarrow\infty}\nu^{KGDP-f,n}(x)(\omega)>0$, and $\lim_{n\rightarrow\infty}\nu^{KGDP-H,n}(x)(\omega)>0$. This is contrary to the fact that KGDP-$f$ or KGDP-$H$ will not measure any $x$ out of $\mathcal{X}_\infty(\omega)$ after a finite number of times.
\end{proof}

Using the previous lemmas, we can conclude with the following asymptotic optimality result:

\begin{theorem}\label{KGDP-f_con}
Non-resampling KGDP-$f$ with truth from prior is asymptotically optimal in finding both the optimal alternative and the correct parameter. The same holds for KGDP-$H$.
\end{theorem}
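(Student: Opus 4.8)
The plan is to feed Lemmas~\ref{p_is_0} and~\ref{measure_suf1} into a short concentration argument for the posterior belief, and then read off both halves of the statement from the reported estimators. The first step is to fix an outcome $\omega$ in the probability-one set $\Omega_0$ on which all the earlier almost-sure statements hold. By Lemma~\ref{measure_suf1}, the set $\mathcal{X}_\infty(\omega)$ of alternatives measured infinitely often is a sufficient set; hence for every candidate $\k_l\ne\k^*$ there is an $x\in\mathcal{X}_\infty(\omega)$ with $f(x;\k_l)\ne f(x;\k^*)$, and $N^n(x)\to\infty$ for that $x$. Lemma~\ref{p_is_0} then yields $p_l^n(\omega)\to 0$ for every such $l$. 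Since there are only $L$ candidates, their probabilities sum to one, and the truth-from-prior hypothesis puts $\k^*$ among them, this forces $p^n(\k^*)(\omega)\to 1$.

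Both conclusions then follow with little extra work. For the alternative, fix $x\in\mathcal{X}$ and split $\bar{f}^n(x)=\sum_{l:\,f(x;\k_l)=f(x;\k^*)}f(x;\k^*)\,p_l^n+\sum_{l:\,f(x;\k_l)\ne f(x;\k^*)}f(x;\k_l)\,p_l^n$; the second sum vanishes by Lemma~\ref{p_is_0} and the first tends to $f(x;\k^*)$, so $\bar{f}^n(x)\to f(x;\k^*)$, and since $\mathcal{X}$ is finite this is uniform in $x$. Hence for all $n$ large enough $\mathop{\text{argmax}}_{x\in\mathcal{X}}\bar{f}^n(x)\subseteq\mathop{\text{argmax}}_{x\in\mathcal{X}}f(x;\k^*)$ --- using the strictly positive gap between the true optimal value and that of any suboptimal alternative --- so $\hat{x}^*$ is eventually an optimal alternative. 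For the parameter, $p^n(\k^*)\to 1$ means $\k^*=\mathop{\text{argmax}}_{l}p_l^n$ for $n$ large, i.e.\ the non-resampling estimate $\hat{\k}^*$ coincides with $\k^*$; a parallel strong-law argument handles the $MSE$-based estimate. The argument for KGDP-$H$ is identical: Lemma~\ref{measure_suf1} is proved for both policies, and nothing in the argument used which of the two scores drove the sampling decisions.

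I expect the genuinely delicate work to be already packaged in the supporting lemmas --- especially the contradiction argument behind Lemma~\ref{measure_suf1}, which in turn leans on the nonnegativity and vanishing of the scores (Lemmas~\ref{f_non_neg}--\ref{v_score0}) --- so the theorem itself is mostly assembly. The three points I would still be careful about are: (i) the identifiability content implicit in ``sufficient set'' (such a set can exist only when $f(\cdot;\k)\equiv f(\cdot;\k^*)$ forces $\k=\k^*$, so ``the correct parameter'' must be read against that standing assumption, or else only up to the function-equivalence class of $\k^*$); (ii) the passage from pointwise convergence of $\bar{f}^n$ to eventual containment of the $\mathop{\text{argmax}}$ sets, which needs both the finiteness of $\mathcal{X}$ and the uniform gap; and (iii) checking that every limit above is taken on the common probability-one event $\Omega_0$, so that the conclusion is ``almost surely'' in exactly the sense of the earlier lemmas.
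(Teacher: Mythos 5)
Your proposal is correct and follows essentially the same route as the paper's proof: combine Lemma~\ref{measure_suf1} (the infinitely-measured alternatives form a sufficient set) with Lemma~\ref{p_is_0} to drive $p_l^n\to 0$ for every $\k_l\ne\k^*$, conclude $p^n(\k^*)\to 1$ almost surely, and read off both the alternative and the parameter from that. You simply spell out the final assembly (uniform convergence of $\bar{f}^n$ over the finite $\mathcal{X}$ and the eventual argmax containment) in more detail than the paper, which states these conclusions without elaboration.
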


The detailed proofs of Lemma~\ref{f_non_neg} - \ref{measure_suf1} and Theorem~\ref{KGDP-f_con} are given in \hyperref[append]{Appendix}.

\section{Convergence of KGDP with Resampling}\label{sec6}

We now establish the important result that the resampling algorithm converges to the true optimal solution, both in terms of identifying the right parameter $\k^*$ as well as the optimal design $x^*$.

As we move from the non-resampling case to the resampling one, the difference is that the candidates are no longer fixed, and $\k^*$ is not guaranteed to be one of the candidates.

Note that the nonnegativity of the knowledge gradient established in Lemma~\ref{f_non_neg} and \ref{H_non_neg} still holds in the resampling case, regardless of whether the candidates include the truth or not.

Unlike the non-resampling case, we have two probability spaces here. One is still $(\Omega, \mathcal{F}, \mathbb{P})$ as above, defined on the measurement history, while the other one describes the randomness of selecting candidates in the resampling process. We denote the latter as $(\Omega_\k, \mathcal{F}_\k, \mathbb{P}_\k)$, where $\Omega_\k$ is the set of all possible selection combinations of the $L$ candidates in $N$ iterations (in this section, $N=\infty$). The full probability space, denoted as $(\Omega_f, \mathcal{F}_f, \mathbb{P}_f)$, is the product space of $(\Omega, \mathcal{F}, \mathbb{P})$ and $(\Omega_\k, \mathcal{F}_\k, \mathbb{P}_\k)$.  We use $\omega$ to represent an element in $\Omega$ and $\omega_\k$ in $\Omega_\k$.




Let $\mathbb{K}$ denote the large pool, i.e, the set of the $K$ $\k$'s. We can adapt the concept of a sufficient set by considering the entire set $\mathbb{K}$. For a subset of alternatives $\mathcal{X}_s\subset\mathcal{X}$, if $f(x;\k^*)$ is the only function among all the $K$ functions that fits the true values of alternatives in $\mathcal{X}_s$, we call it a sufficient set.

Unlike the non-resampling case, since the candidates may keep changing, the limit of a particular $p_i^n$ may not exist. However, if we regard the whole $\mathbb{K}$ as the candidate set and consider the probability of each $\k$, then Lemma~\ref{p_is_0} still holds, demonstrating that infinite measurements on $x$ give its true value. We define  $\Omega_1$ as the set of almost sure events in $\Omega$ for Lemma~\ref{p_is_0} to hold.



\begin{lemma}\label{measure_suf}
For any $\omega\in\Omega_1$, the alternatives measured infinitely often under the KGDP-$f$ or KGDP-$H$ policy constitute a sufficient set. We denote this set as $\mathcal{X}_\infty(\omega)$.
\end{lemma}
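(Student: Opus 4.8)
\textbf{Proof proposal for Lemma~\ref{measure_suf}.}

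The plan is to mimic the contradiction argument used for Lemma~\ref{measure_suf1} in the non-resampling case, but now working at the level of the large pool $\mathbb{K}$ rather than a fixed candidate set of size $L$. First I would fix $\omega\in\Omega_1$, so that by Lemma~\ref{p_is_0} (applied with $\mathbb{K}$ playing the role of the candidate set) every $x\in\mathcal{X}_\infty(\omega)$ has been learned almost surely: the aggregate posterior weight on $\{\k\in\mathbb{K} : f(x;\k)=f(x;\k^*)\}$ tends to $1$ for each such $x$, simultaneously over the finitely many $x\in\mathcal{X}_\infty(\omega)$. Suppose for contradiction that $\mathcal{X}_\infty(\omega)$ is \emph{not} a sufficient set. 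Then there is some $\k'\in\mathbb{K}$, $\k'\ne\k^*$, with $f(x;\k')=f(x;\k^*)$ for all $x\in\mathcal{X}_\infty(\omega)$; since $\k'\ne\k^*$ fails to be distinguished, by definition of a sufficient set there must exist an alternative $x_0\in\mathcal{X}\setminus\mathcal{X}_\infty(\omega)$ with $f(x_0;\k')\ne f(x_0;\k^*)$ (if no such $x_0$ existed anywhere in $\mathcal{X}$, then $\k'$ and $\k^*$ would be indistinguishable on all of $\mathcal{X}$, contradicting that $\mathbb{K}$-sufficiency of $\mathcal{X}$ is assumed or can be assumed without loss). By construction $x_0$ is measured only finitely often.

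The heart of the argument is then to show $\liminf_{n\to\infty}\nu^{KGDP\text{-}f,n}(x_0)(\omega)>0$ and likewise for KGDP-$H$, contradicting the fact that an alternative with a score bounded away from zero infinitely often would be selected infinitely often by the argmax policy (using Lemma~\ref{v_score0}, which guarantees the scores of the infinitely-measured alternatives vanish). To get the strict positivity I would invoke the equality conditions of Lemma~\ref{f_non_neg} and Lemma~\ref{H_non_neg}: the score at $x_0$ vanishes only if all candidates currently carrying positive probability agree at $x_0$ (or, for KGDP-$f$, share a common maximizer). But the candidate set at large $n$ must retain, with non-negligible probability, representatives that agree with $\k^*$ on all of $\mathcal{X}_\infty(\omega)$; among these the functions disagreeing at $x_0$ cannot all be pruned, because the MSE / likelihood-based resampling keeps exactly the $\k$'s that best fit the observed history, and $\k'$-type parameters fit the observations on $\mathcal{X}_\infty(\omega)$ just as well as $\k^*$. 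I would make this quantitative by exhibiting a uniform lower bound on the posterior mass of two candidates with different values at $x_0$, using the continuity of $\nu^{KGDP\text{-}f}$ and $\nu^{KGDP\text{-}H}$ in $\vec{p}$ (established for Lemma~\ref{v_score0}) together with a compactness/finiteness argument over the finitely many possible candidate configurations drawn from $\mathbb{K}$.

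The main obstacle is precisely the interaction between the two probability spaces $\Omega$ and $\Omega_\k$: I must argue that on a set of full $\mathbb{P}_f$-measure the resampling process cannot conspire to eliminate \emph{all} candidates that distinguish $x_0$, even though individual $p_i^n$ need not converge. The cleanest route is to show that whenever resampling is triggered, the small pool $\mathbb{S}^n$ built from the MSE sub-level set necessarily contains $\k^*$ (its MSE converges to $\sigma^2$, the noise floor, which is essentially optimal) and also contains $\k'$-type parameters (same MSE on the learned alternatives, and the finitely-often-measured $x_0$ contributes a vanishing fraction $N^n(x_0)/n\to 0$ to the MSE), so the weighted-without-replacement resampling assigns them comparable, non-vanishing selection probabilities infinitely often. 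A Borel–Cantelli argument then ensures that, almost surely under $\mathbb{P}_f$, infinitely many post-resampling candidate sets contain both a $\k^*$-consistent and a $\k'$-consistent candidate, forcing $\nu^{n}(x_0)$ above a fixed threshold infinitely often and completing the contradiction. A secondary technical point to handle carefully is that ``$\k'$ fits $\mathcal{X}_\infty(\omega)$'' is an event depending on $\omega$, so the lower bound on MSE-closeness must be taken uniformly over the finitely many candidate parameters in $\mathbb{K}$, which is where finiteness of $\mathbb{K}$ is essential.
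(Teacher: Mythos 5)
Your proposal follows essentially the same route as the paper's proof: argue by contradiction, pass to aggregate posterior weights over the large pool $\mathbb{K}$ so that $\k^*$ together with the undistinguished parameters $\mathbb{K}'$ retain non-vanishing weight in the MSE-based small pool, apply Borel--Cantelli to conclude that two candidates disagreeing at some finitely-measured $x_0$ co-occur in the candidate set infinitely often, and derive a contradiction from the resulting KG score bounded away from zero versus the vanishing scores on $\mathcal{X}_\infty(\omega)$ from Lemma~\ref{v_score0}. The only points you gloss over that the paper treats explicitly are the (conditional on $\omega$) independence of the resampling draws needed for the second Borel--Cantelli lemma and the degenerate case where only one candidate is replaced per resampling step, but these are technicalities within the same argument.
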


\begin{proof}[Sketch of proof]
(See \hyperref[append]{Appendix} for full proof.) Assume the contrary. Then there exists at least one $\k\ne\k^*$, such that $f(x;\k)=f(x;\k^*)$ for any $x\in\mathcal{X}_\infty(\omega)$. Denote the set of such $\k$'s as $\mathbb{K}'$. We can show that for this fixed $\omega$, the event happens infinitely often that at least two $\k$'s out of $\mathbb{K}'\bigcup \{\k^*\}$ are included in the candidate set. At these times, we can further show that there exists an $x\notin \mathcal{X}_\infty(\omega)$ and $\e>0$ such that $\nu^{KGDP-f,n}(x)>\e$ and $\nu^{KGDP-H,n}>\e$, while on the other hand, for any $x\in\mathcal{X}_\infty(\omega)$, $\lim_{n\rightarrow\infty}\nu^{KGDP-f,n}(x)=0$ and $\lim_{n\rightarrow\infty}\nu^{KGDP-H,n}(x)=0$. This is contradictory to the fact that any $x\notin\mathcal{X}_\infty(\omega)$ is measured for a finite number of times.
\end{proof}



Lemma \ref{measure_suf} implies the following theorem, which is also our main result in this section:

\begin{theorem}\label{re_KGDP-f_con}
Resampling KGDP-$f$ is asymptotically optimal in finding the optimal alternative and the correct parameter. The same holds for KGDP-$H$.
\end{theorem}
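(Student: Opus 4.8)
The plan is to obtain both halves of the statement --- convergence of the parameter estimate $\hat\k^*$ and of the alternative estimate $\hat x^*$ --- from Lemma~\ref{measure_suf}, which guarantees that on a full-measure subset of $\Omega_f$ the set $\mathcal{X}_\infty(\omega)$ of infinitely-measured alternatives is sufficient, i.e.\ $\k^*$ is the \emph{unique} $\k\in\mathbb{K}$ with $f(x;\k)=f(x;\k^*)$ for all $x\in\mathcal{X}_\infty(\omega)$. I would fix, once and for all, a sample point in the intersection of the full-measure sets on which Lemma~\ref{measure_suf}, Lemma~\ref{p_is_0}, and the strong laws used below all hold, and prove the two convergences pointwise there.

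For the parameter, since $\hat\k^*=\mathop{\text{argmin}}_{\k\in\mathbb{K}}MSE(\k|\mathcal{F}^N)$ and $\mathbb{K}$ is finite, it suffices to show that for each fixed $\k\ne\k^*$ one eventually has $MSE(\k|\mathcal{F}^n)>MSE(\k^*|\mathcal{F}^n)$. Writing $\hat y^{j}=f(x^{j-1};\k^*)+W^{j}$ and $\Delta_j=f(x^{j-1};\k^*)-f(x^{j-1};\k)$, a one-line expansion gives $n[MSE(\k|\mathcal{F}^n)-MSE(\k^*|\mathcal{F}^n)]=\sum_{j=1}^{n}\Delta_j^2+2\sum_{j=1}^{n}W^{j}\Delta_j$. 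Sufficiency yields some $x_0\in\mathcal{X}_\infty(\omega)$ with $f(x_0;\k)\ne f(x_0;\k^*)$, so $\sum_{j\le n}\Delta_j^2\ge N^n(x_0)\,[f(x_0;\k^*)-f(x_0;\k)]^2\to\infty$; meanwhile $\sum_{j\le n}W^{j}\Delta_j$ is a martingale with predictable quadratic variation $\sigma^2\sum_{j\le n}\Delta_j^2\to\infty$ (the $W^j$ are i.i.d.\ $\mathcal{N}(0,\sigma^2)$ and independent of the bounded, predictable $\Delta_j$, which depend only on the history including past resampling draws), so the strong law for martingales gives $\sum_{j\le n}W^{j}\Delta_j=o\!\big(\sum_{j\le n}\Delta_j^2\big)$. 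Hence the displayed quantity tends to $+\infty$ and $\hat\k^*=\k^*$ for all large $n$. Note the ordinary SLLN is not enough here: the relative frequency of $x_0$ inside $\mathcal{X}_\infty(\omega)$ may vanish, so both $MSE$'s may converge to $\sigma^2$ and one must compare the diverging series $\sum\Delta_j^2$ against the martingale fluctuation rather than compare limits.

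For the alternative, the previous step already makes $\k^*$ the strict minimiser of $MSE(\cdot|\mathcal{F}^n)$ over $\mathbb{K}$ for all large $n$, so $\k^*\in\mathbb{S}^n$ (the pool of the $R\ge1$ lowest-$MSE$ samples) eventually, and resampling occurs infinitely often since it is triggered at least every $n^{resamp}$ iterations. At a resampling time $n$ large enough that $\k^*\in\mathbb{S}^n$, the likelihood weight of $\k^*$ relative to any other $\k\in\mathbb{S}^n$ is $\exp\!\big(\tfrac{n}{2\sigma^2}[MSE(\k|\mathcal{F}^n)-MSE(\k^*|\mathcal{F}^n)]\big)\ge 1$, so $\k^*$ is the heaviest of at most $R$ samples and is drawn into the candidate set with conditional probability at least $1/R$; a uniform positive lower bound over infinitely many resampling times at which $\k^*$ is not yet a candidate lets a conditional (L\'evy-type) Borel--Cantelli argument conclude that $\k^*$ almost surely becomes a candidate at some resampling time. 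Once $\k^*\in\mathbb{L}^n$, every other candidate differs from $\k^*$ somewhere on $\mathcal{X}_\infty(\omega)$, so Lemma~\ref{p_is_0} forces its probability to $0$ and $p^n(\k^*)\to 1$; hence $\k^*$ is never removed by any of the resampling rules and remains a candidate forever. Since $f$ is bounded over the finite pool $\mathbb{K}$, $p^n(\k^*)\to1$ gives $\bar f^n(x)=\sum_i p_i^n f(x;\k^n_i)\to f(x;\k^*)$ for every $x$, whence $\hat x^*=\mathop{\text{argmax}}_x\bar f^n(x)$ equals $\mathop{\text{argmax}}_x f(x;\k^*)$ for all large $n$ (and the attained value converges in general). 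Only Lemma~\ref{measure_suf} and Lemma~\ref{p_is_0} were used, and both were established for KGDP-$f$ and KGDP-$H$ alike, so the same argument proves the claim for KGDP-$H$.

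I expect the genuine obstacle to be the second part: properly coupling the measurement space $\Omega$ and the resampling space $\Omega_\k$, upgrading the constant bound $1/R$ to an almost-sure ``$\k^*$ eventually enters and never leaves the candidate set'' statement, and checking the ``once in, stays in'' property against all three removal mechanisms of the resampling procedure (the $p\le\e$ removal, the ``remove one or two least probable'' fallback, and the repeat-until-stable inner loop). By contrast the martingale estimate in the parameter step, and the deduction $\hat x^*\to x^*$ from $p^n(\k^*)\to1$, should be routine.
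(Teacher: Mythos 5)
Your proposal is correct and follows the same overall architecture as the paper's proof: both arguments rest on Lemma~\ref{measure_suf} and reduce the theorem to showing that $\k^*$ eventually enters and permanently remains in the candidate set, after which $p^n(\k^*)\to 1$ and the convergence of $\hat{x}^*$ follows from $\bar f^n\to f(\cdot\,;\k^*)$. The technical route differs in two places. First, for the parameter estimate you prove $\mathop{\text{argmin}}_{\k\in\mathbb{K}}MSE(\k|\mathcal{F}^n)=\k^*$ eventually via a martingale strong law comparing $\sum_j W^j\Delta_j$ against the diverging compensator $\sum_j\Delta_j^2$; the paper instead gets the equivalent statement (that the likelihood ratio $w^n(\k)/w^n(\k^*)=\exp\bigl(-\tfrac{n}{2\sigma^2}[MSE(\k)-MSE(\k^*)]\bigr)\to 0$) from Lemma~\ref{p_is_0}, whose proof factors the likelihood ratio alternative-by-alternative so that the ordinary SLLN applied to the i.i.d.\ noise subsequence at each fixed $x$ suffices --- so your remark that the ordinary SLLN ``is not enough'' is slightly overstated, though your martingale route is self-contained and avoids the optional-skipping justification that the per-alternative decomposition implicitly needs. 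Second, for permanence of $\k^*$ in the candidate set, you use a conditional second Borel--Cantelli argument with the uniform $1/R$ selection bound to show $\k^*$ enters at least once after the time when its relative weight exceeds the removal thresholds, plus a separate ``once in, stays in'' check against the removal rules; the paper instead bounds $\mathbb{P}_\k(C_{s_n}^c)\le Ke^{-n\rho}$ and applies the first Borel--Cantelli lemma to conclude directly that $\k^*$ is absent from the candidate set only finitely often, which sidesteps the once-in-stays-in bookkeeping (at the cost of needing the sharper exponential bound). Both routes are sound; the obstacles you flag at the end (product-space coupling, the three removal mechanisms) are real but are resolved exactly as you sketch, since $w^n(\k)/w^n(\k^*)\to 0$ uniformly over the finite pool forces $p^n(\k^*)$ above any fixed threshold whenever $\k^*$ is a candidate at large $n$.
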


\begin{proof}[Sketch of proof]
(See \hyperref[append]{Appendix} for full proof.) The crucial step is to show that for any $\omega\in\Omega_1$, there exists time $T(\omega)$, such that $\k^*$ always appears in the candidate set after $T(\omega)$ with probability $1$. Note that this probability is calculated in the $(\Omega_\k, \mathcal{F}_\k, \mathbb{P}_\k)$ space. Extending this result to the full probability space $(\Omega_f, \mathcal{F}_f, \mathbb{P}_f) = (\Omega, \mathcal{F}, \mathbb{P})\times(\Omega_\k, \mathcal{F}_\k, \mathbb{P}_\k)$, we can see that $\lim_{n\rightarrow\infty}p^n(\k^*)=1$ happens with probability $1$.
\end{proof}

\section{Empirical Results}\label{sec7}

We study an asymmetric unimodal problem as a benchmark function, and another material science problem, known as the nanoemulsion problem in a water-oil-water (W/O/W) model.

We mainly demonstrate results from the following aspects:
\begin{itemize}
 \item The visualization of the resampling process, including the evolution of the candidates, small pool, and approximated function $\bar{f}^n(x)$;
 \item The performance of resampling KGDP-$f$ and KGDP-$H$, both in relation to each other and their absolute performance in terms of different metrics;
 \item The relative performance of resampling KGDP-$f$ and KGDP-$H$ policies versus competing policies;
 \item The comparison of resampling and non-resampling KGDP methods;
 \item The empirical rate of convergence under different noise and dimensionality settings.
\end{itemize}

We feel that the rate of convergence result is the most important.

%

\subsection{An Asymmetric Unimodal Benchmark Function}

We study a multidimensional benchmark function, which produces a wide range of asymmetric, unimodular functions with heteroscedastic noise, which we have found to be a challenging class of test problems. The function is given by
\small\begin{align*}
f(x_1,...,x_k) = \sum_{i=1}^k\eta_{1,i}\mathbb{E}[\min(x_i,(D-\sum_{j=1}^{i-1}x_j)^+)]-\sum_{i=1}^k\eta_{2,i}x_i,
\end{align*}\normalsize
where $x_1,...,x_k$ are the decision vector, $D$ is a uniformly distributed random variable, $\eta_{1,i}$ are fixed constants, and $\eta_{2,i}$ are unknown parameters. In the following experiments, we fix the variance of D, but set its mean as well as $\eta_{2,i}$'s as parameters ($\k$), which results in a $(k+1)$-dimensional problem.

\subsubsection{Illustration of the Resampling Process}

We first show the results of resampling KGDP-$f$ in two-dimensional cases, where we can plot the $\k$ space.

\begin{figure}[!bp]
\centering
\hspace{3.5em}\includegraphics[width=0.98\textwidth]{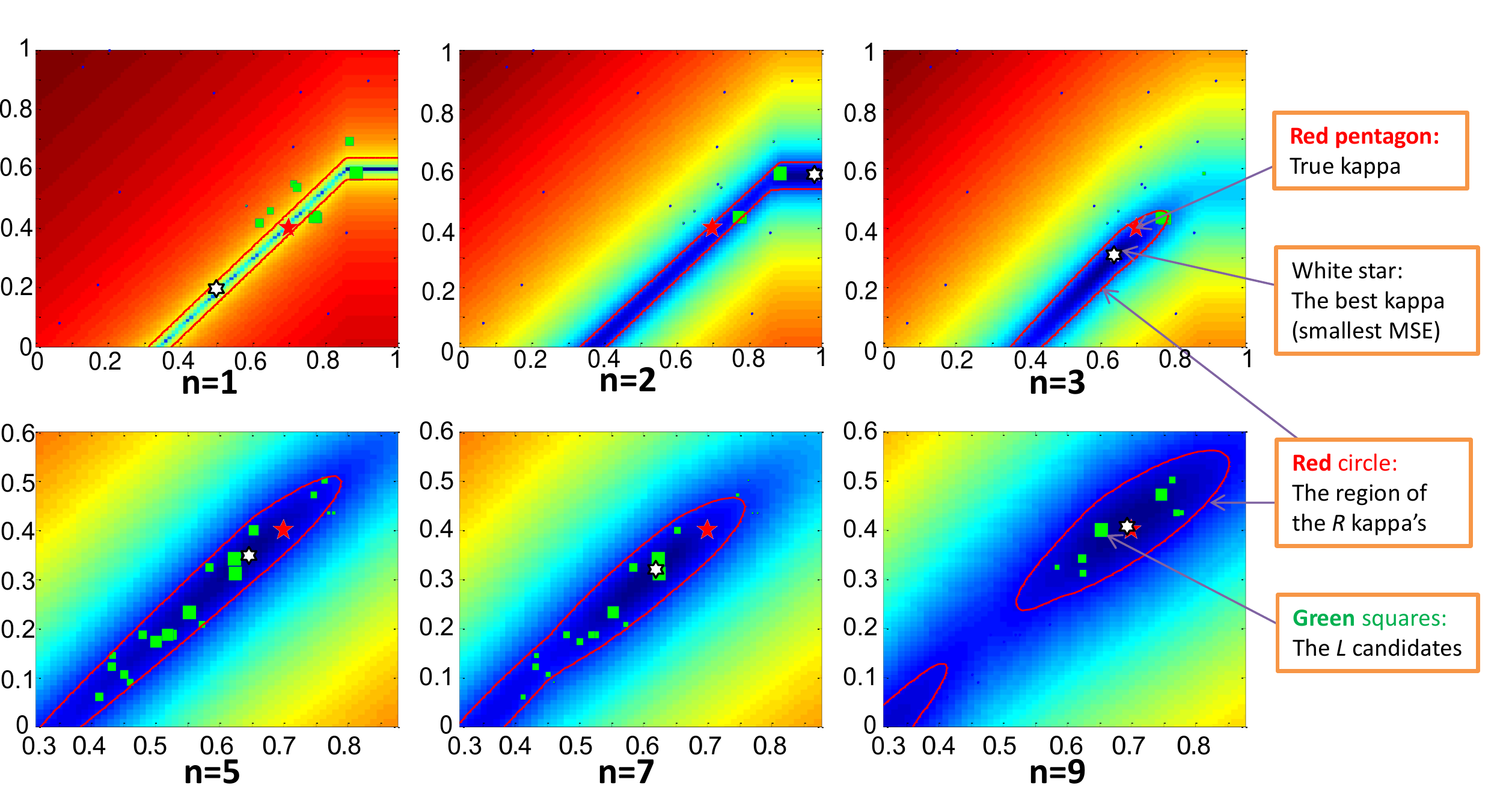}
\caption{Illustration of resampling in $\k$ space (Iterations $1$, $2$, $3$, $5$, $7$, $9$) ($5$,$7$,$9$ are zoomed in).}\label{fig_epigraph1}
\end{figure}

Figure~\ref{fig_epigraph1} shows the evolution of the small pool and the $L$ candidates in $\k$ space in the first several iterations of a realization of the problem. In these images, the horizontal and vertical axes represent $\k_1$ and $\k_2$ respectively, with color indicating the values of MSE. The red pentagon shows where the truth $\k^*$ is located, while the white star indicates the $\k$ with the smallest MSE in the large pool. The region circled by the red line is the range of the small pool.  The green squares indicate the locations of the candidates, and the sizes are proportional to their probabilities. Note that these images have been scaled to show more detail. We can see that the small pool shrinks quickly towards around $\k^*$, and the $\k$ with the smallest MSE as well as the candidates also converges to $\k^*$ within only a few iterations.

\subsubsection{Comparison of Different Policies to Choose $x^n$}

We use the opportunity cost (OC) to evaluate the performance of various policies from the alternative perspective. OC is defined as the difference between the values at our estimated optimal $x$ and the true optimal $x$, i.e.,
\small\begin{align*}
OC(n) = \max_{x\in \mathcal{X}}f(x;\k^{*}) - f(\mathop{\text{argmax}}_{x\in\mathcal{X}} \bar{f}^n(x);\k^{*}).
\end{align*}\normalsize

To normalize the opportunity cost, we define as the percentage OC the ratio with respect to the optimal function value, i.e.:
\small\begin{align*}
OC\%(n)=\frac{\max_{x\in \mathcal{X}}f(x;\k^{*}) - f(\mathop{\text{argmax}}_{x\in\mathcal{X}} \bar{f}^n(x);\k^{*})}{\max_{x\in \mathcal{X}}f(x;\k^{*})}.
\end{align*}\normalsize


We define the noise level as the ratio of the noise standard deviation to the range of the true function, i.e, $\frac{\sigma}{\max_x f(x;\k^*)-\min_x f(x;\k^*)}$. When we say the noise is $20\%$, we mean this ratio is $20\%$.

Figure~\ref{fig_3D_OC} shows the comparison of the five policies  in three dimensional settings. The subimages correspond to $5\%$, $20\%$ and $50\%$ of noise from left to right. We can see that KGDP-$f$ and KGDP-$H$ are among the best under different noise levels. We have similar results for two- and five-dimensional cases, too.

%
%
%
\begin{figure}[!htp]
\centering
\includegraphics[width=0.95\textwidth]{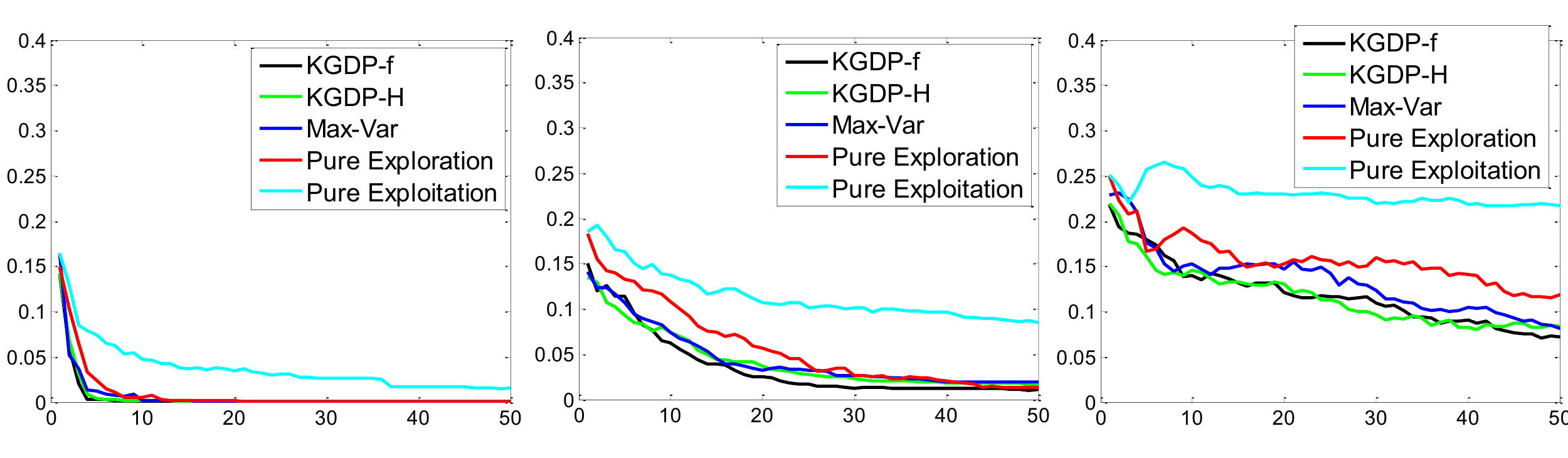}
\caption{\small OC\% of different policies for three-dimensional $\k$ ($5\%$, $20\%$ and $50\%$ noise from left to right).}\label{fig_3D_OC}
\end{figure}


%

\subsection{Application in W/O/W Nanoemulsion Stability Study}


We study the same problem of water-oil-water(W/O/W) nanoemulsion stability as in \cite{Chen2014}, and repeat its experiments using the original non-resampling KGDP and our resampling KGDP methods to compare the results.

The problem studies controlled payload delivery using water-oil-water (W/O/W) double nanoemulsions, which depicts the delivery of payload molecules from the internal water droplet, surrounded by a larger droplet of oil, to the external water. This process is performed via laser excitation and controlled by several experimental variables. We can model the stability of this process as a function $f(x;\k)$ that is nonlinear in $\k$ (as well as $x$).  $x$ is a five-dimensional control variable and includes variables such as the initial volume of the external water, the initial volume fraction of the oil droplets, and the diameter of the oil droplets. $\k$ includes seven dimensions, representing the unknown parameters that appear in the formulation, such as the energy barrier for flocculation, the rate prefactor for coalescence and droplet adsorption/desorption energy barrier difference. Our goal is to conduct a series of experiments to learn the optimal setting ($x$) to achieve the best stability, and also to learn the correct unknown parameters $\k$. A more detailed introduction and formulation of this problem can be found in \cite{Chen2014}.

In order to be consistent with \cite{Chen2014}, we fix three dimensions of $x$ and create a finite set of alternatives using the other two dimensions. For $\k$, we study 1) a three dimensional case and 2) a seven dimensional one in our experiments. Remember that the dimensionality of $\k$ is more important than the dimensionality of $x$, since we always have finite samples of $x$.

\subsubsection{Three Dimensional $\k$}

Figure~\ref{fig_nano_3D_OC_50} shows the comparison of the various policies in terms of OC\% under $50\%$ of noise. The left image is a comparison across the five policies, while the right one compares the resampling method with non-resampling KGDP in \cite{Chen2014}.

\begin{figure}[!htp]
\centering
\begin{subfigure}{0.48\textwidth}
\centering
\includegraphics[width=0.8\textwidth]{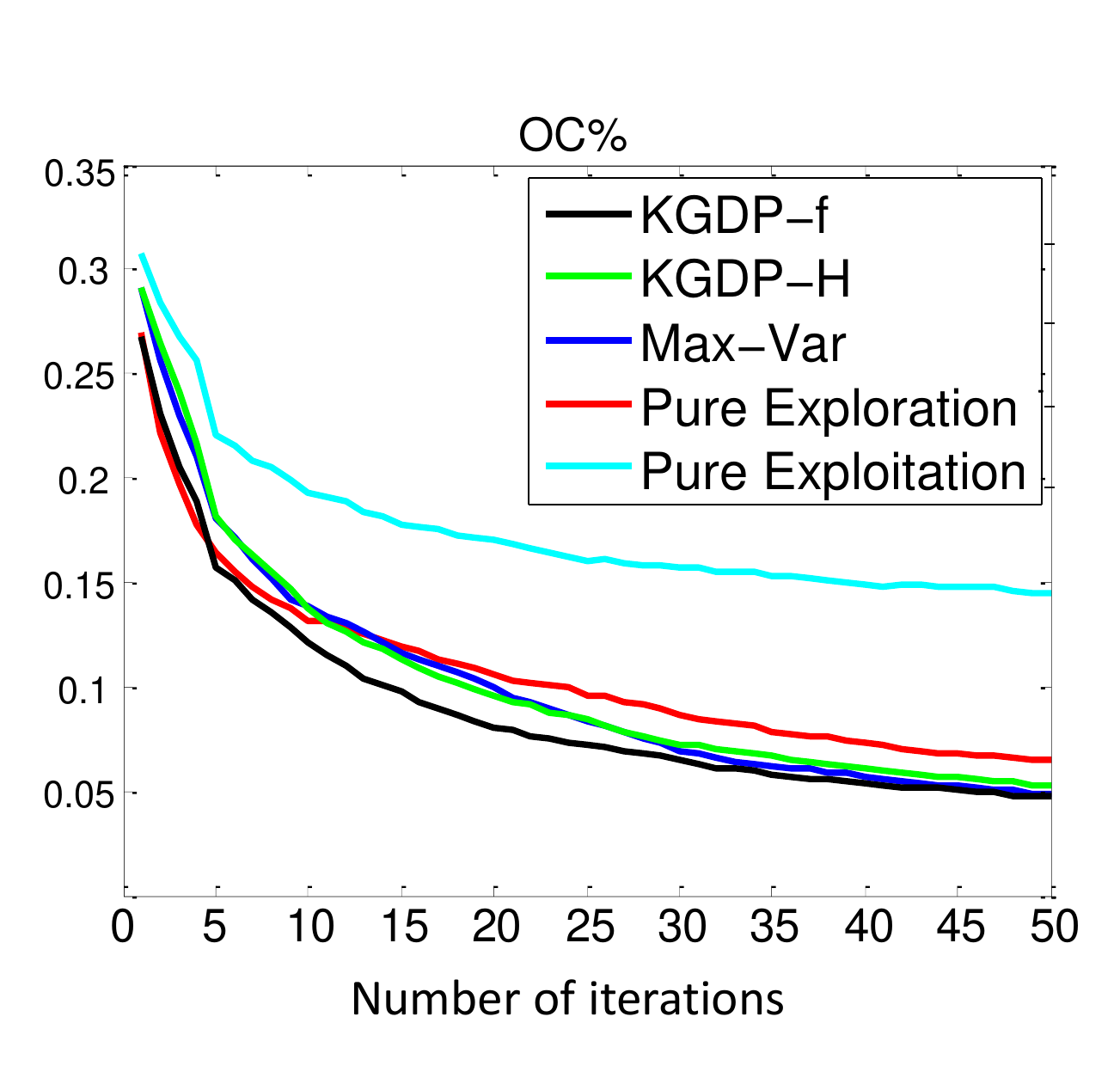}
\caption{\small Comparison across five policies}
\end{subfigure}
\begin{subfigure}{0.48\textwidth}
\centering
\vspace{0.4em}
\includegraphics[width=0.76\textwidth]{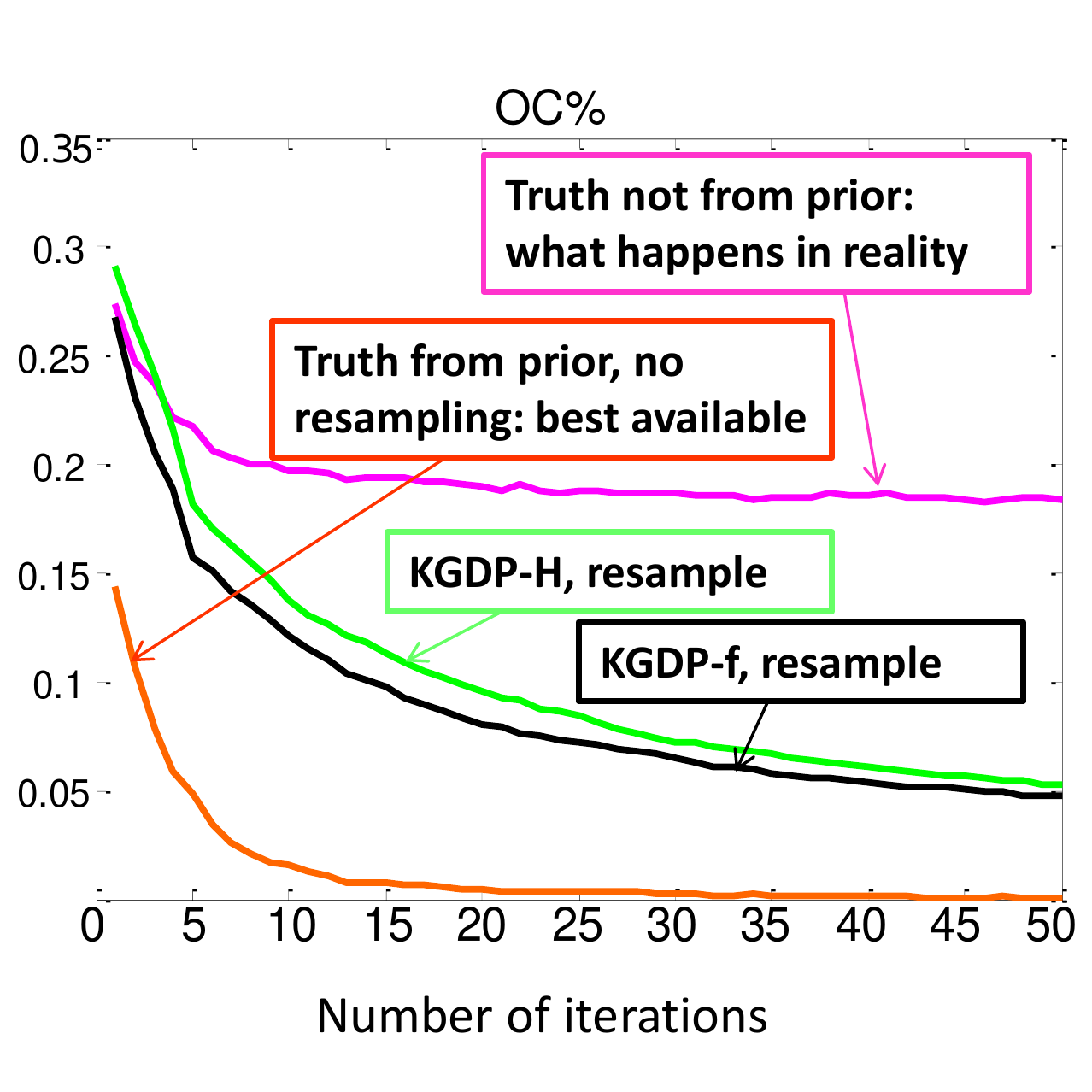}
\caption{\small Comparison with non-resampling KGDP}
\end{subfigure}
\caption{\small OC\% of different policies in three-dimensional nanoemulsion ($50\%$ noise).}\label{fig_nano_3D_OC_50}
\end{figure}


%
%

The left image indicates that Max-Var, KGDP-$f$ and KGDP-$H$ are among the best, showing very close performance. The right image compares: (1) the orange curve: the non-resampling KGDP-$f$ with truth from prior, which is the ideal case; (2) the pink curve: the non-resampling KGDP-$f$ with truth not from prior, which is what happens with truth not from prior using \cite{Chen2014}; (3) the black curve: resampling KGDP-$f$; and (4) the green curve: resampling KGDP-$H$. We can see that (2) gets stuck after a few iterations, while our resampling methods using either KGDP-$f$ or KGDP-$H$ are both able to find the more probable parameters and approach the ideal case gradually.



We have shown the performance of our resampling algorithm in finding the best alternative in terms of OC\%. To show its performance in learning the parameters, we denote our best estimate of $\k^*$ as $\hat{\k}^*$, and calculate:
\begin{itemize}
\item[(1)] the mean square error of $f(x;\k^*)$ and $f(x;\hat{\k}^*)$, defined as (remember $M$ is the size of $\mathcal{X}$):
    \small\begin{align*}
    f_{MSE} = \frac{1}{|\mathcal{X}|}\sum_{x\in\mathcal{X}}|f(x;\k^*)-f(x;\hat{\k}^*)|^2,
    \end{align*}\normalsize
\item[(2)] or the error of a particular dimension of $\k$, say $|\k^*_1-\hat{\k}^*_1|$.
\end{itemize}

Note that in method (2), we do not calculate the error of all dimensions as $||\k^*-\hat{\k}^*||$. This is because different dimensions of $\k$ may have different units and follow various distributions, hence it is meaningless to sum up the errors of different dimensions. Moreover, in a higher dimensional problem, some dimensions may converge more quickly than others, and while the entire vector converges in aggregate, individual dimensions may converge more slowly than others.

We show the MSE and error of $\k$ in Figure~\ref{fig_nano_3D_mse_50}. In the first image, we calculate $f_{MSE}$, and plot $\frac{\sqrt{f_{MSE}}}{\max_x f(x;\k^*)-\min_x f(x;\k^*)}$. and the other three show $|\k^*_1-\hat{\k}^*_1|$, $|\k^*_2-\hat{\k}^*_2|$ and $|\k^*_3-\hat{\k}^*_3|$, respectively. Note that in this example, the second dimension seems less important than the other two. We also have similar results under $20\%$ of noise.  

\begin{figure}[!htp]
\centering
\includegraphics[width=\textwidth]{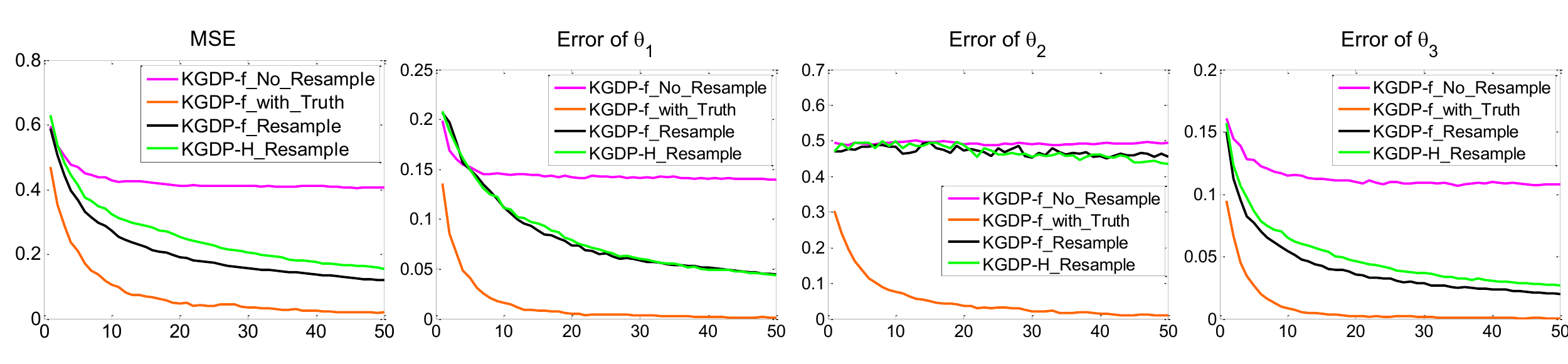}
\caption{\small Performance of resampling KGDP in finding $\k$ in three-dimensional nanoemulsion ($50\%$ noise).}\label{fig_nano_3D_mse_50}
\end{figure}

%
%


\subsubsection{Seven Dimensional $\k$}

In a seven-dimensional case, Figure~\ref{fig_nano_7D_fbar} shows the process of $\bar{f}^n(x)$ approaching the true function, using resampling KGDP-$f$ under $20\%$ noise. The four images in the second row are our approximations at $n=1,10,20,45$ respectively.


%

\begin{figure}[!htp]
\centering
\includegraphics[scale=0.4]{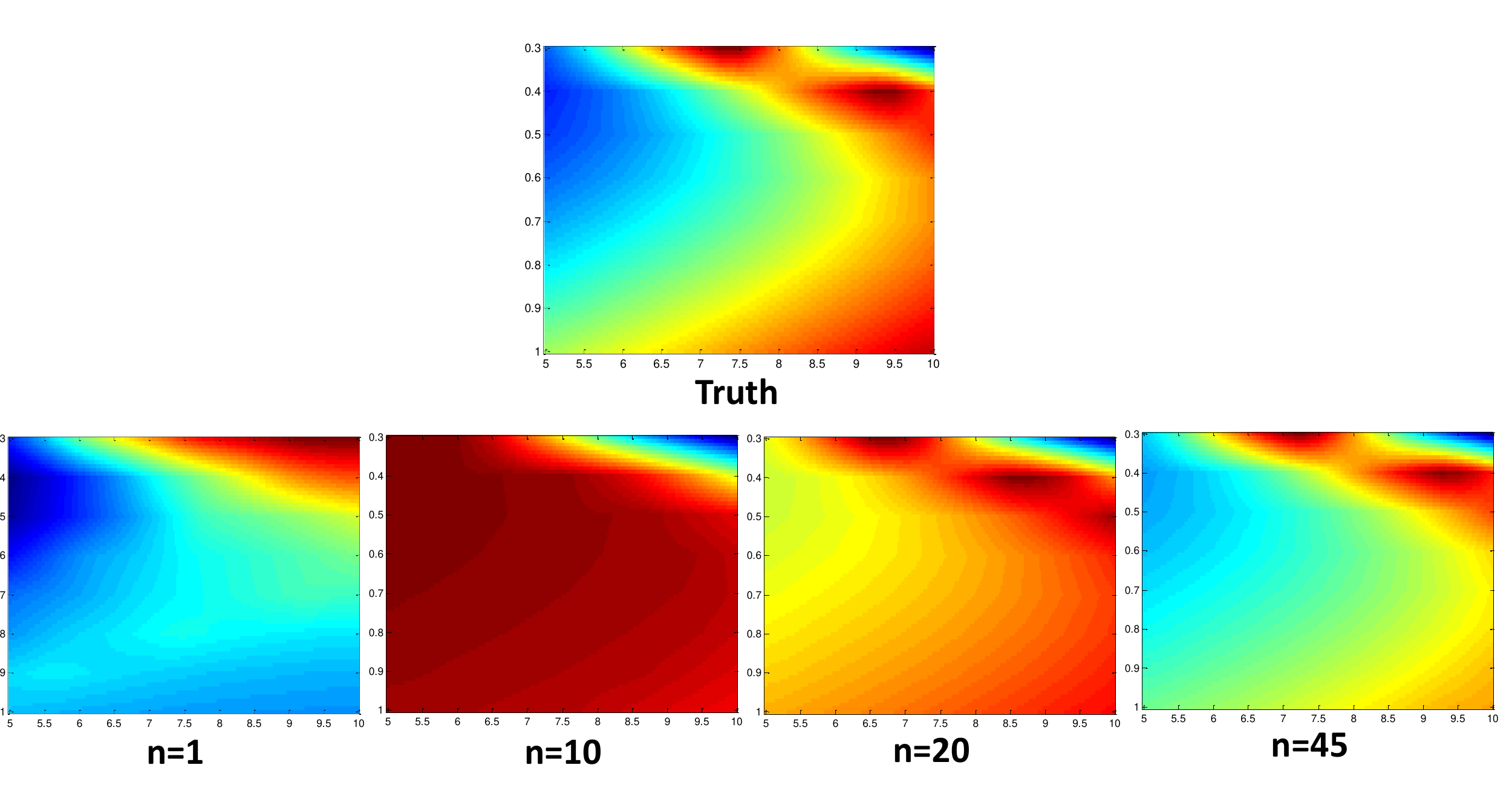}
\caption{\small Evolution of $\bar{f}^n(x)$ (seven-dimensional $\k$, $20\%$ noise).}\label{fig_nano_7D_fbar}
\end{figure}

The result of OC\% under $50\%$ noise are shown in Figure~\ref{fig_nano_7D_OC_50}. We have similar results regarding learning the parameters as in three-dimensional cases. 

\begin{figure}[!htp]
\centering
\begin{subfigure}{0.48\textwidth}
\centering
\includegraphics[width=0.7\textwidth]{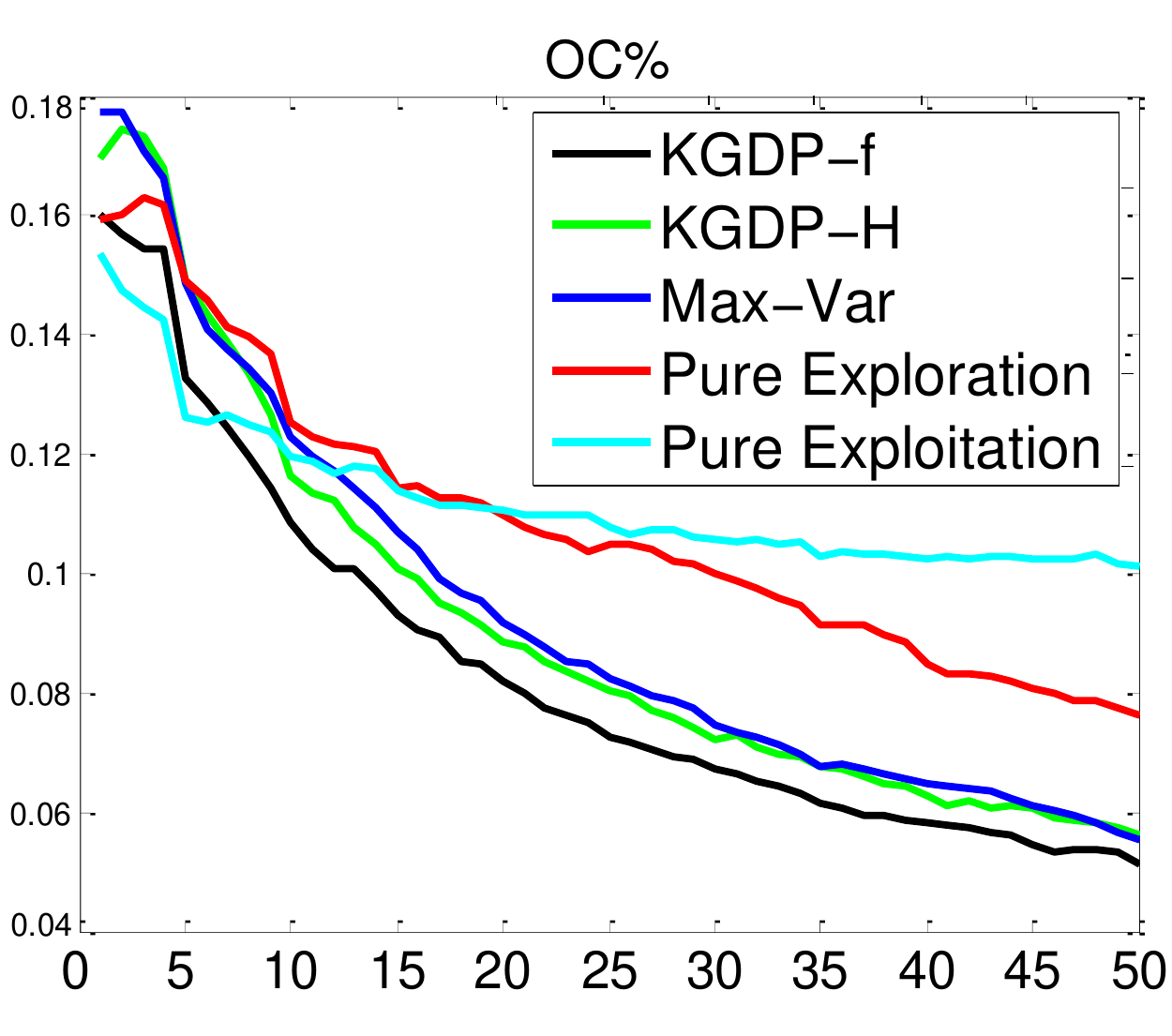}
\caption{\small Comparison across five policies}
\end{subfigure}
\begin{subfigure}{0.48\textwidth}
\centering
\includegraphics[width=0.7\textwidth]{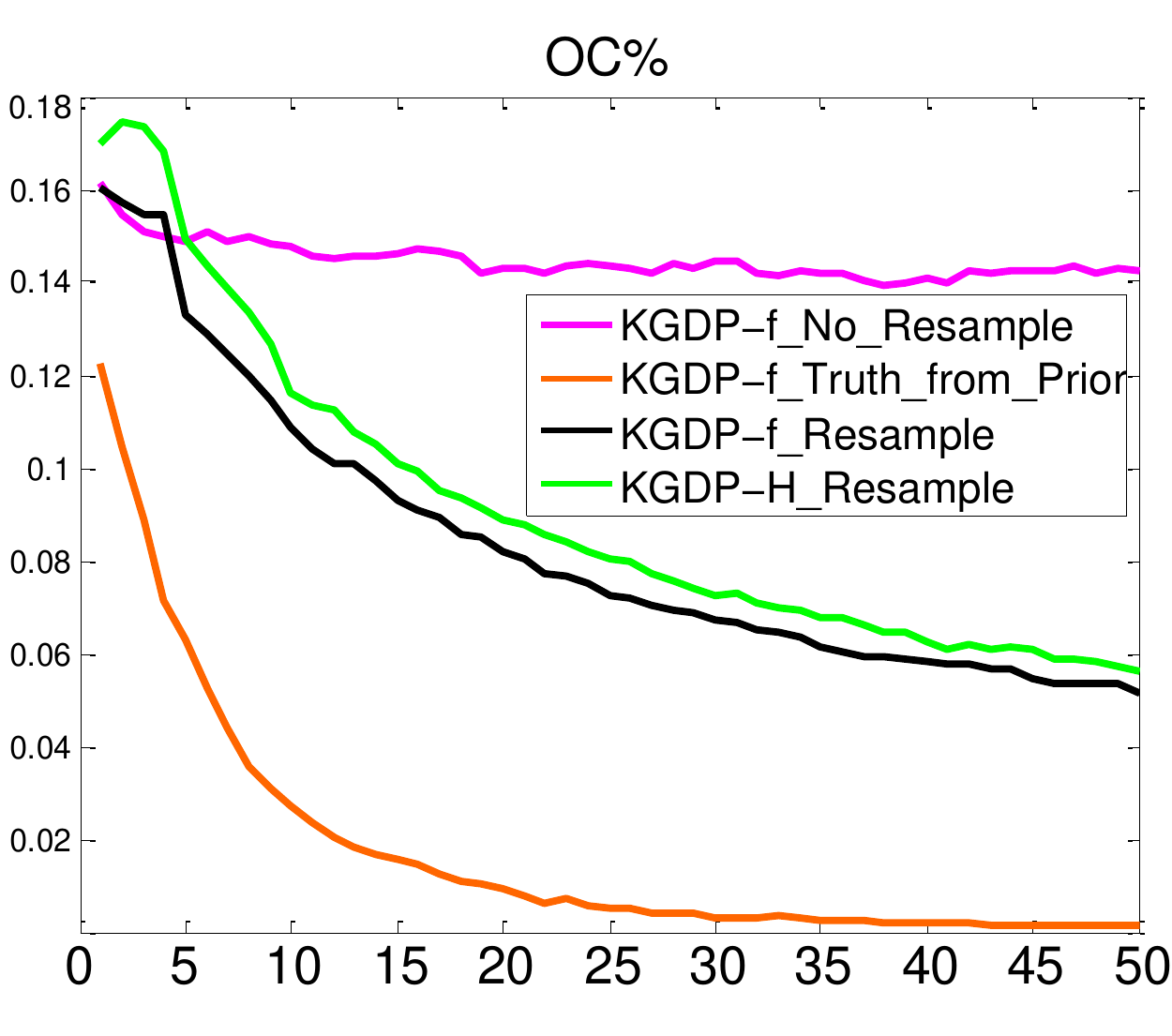}
\caption{\small Comparison with non-resampling KGDP}
\end{subfigure}
\caption{\small OC\% of different policies in seven-dimensional nanoemulsion ($50\%$ noise).}\label{fig_nano_7D_OC_50}
\end{figure}

\section{Conclusion}\label{sec8}

Optimal learning for nonlinear belief models has been an interesting but difficult problem. With a belief model nonlinear in unknown parameters, the knowledge gradient becomes computationally intractable if using continuous paramenter space. In \cite{Chen2014}, the Knowledge Gradient with Discrete Priors (KGDP) model is proposed, which uses a sampled representation of the parameter space. This method works well when one of the sampled candidates is exactly the truth, but this would not happen in practice.

In this paper, we propose a resampling algorithm that cooperates with KGDP to solve this problem. We no longer require that one of the candidates is correct, but use the measurement history to guide resampling and discovering more probable parameters in the parameter space. Within a small budget of noisy and expensive experiments, our algorithm is capable of both finding the optimal alternative that maximizes the value function, and narrow down the location of the unknown parameters. Experiments on both a synthetic benchmark function and a real world materials science problem, known as the stability of W/O/W nanoemulsion problem, have shown strong performance of our algorithm.

We recognize the duality of this optimization problem and manipulate in both the alternative and the parameter spaces. Besides the idea of maximizing the value function (denoted as KGDP-$f$) as used in previous optimal learning literatures, we also put forward an entropy-oriented metric that works under the knowledge gradient framework, which aims directly at lowering the uncertainty of the parameters (in terms of entropy). Experiments show strong performance of both metrics.

We also prove the asymptotic optimality of KGDP-$f$ and KGDP-$H$ in both the non-resampling with truth from prior case, and the resampling framework. That means, given an infinite number of measurements, we are able to find both the optimal alternative and the correct parameter. Hence, KGDP-$f$ and KGDP-$H$ are both myopically and asymptotically optimal in learning the alternative and reducing parameter uncertainty respectively.


\setcounter{figure}{0}
\counterwithin*{figure}{section}
\vspace{0.5em}
\appendix
\section*{Appendix (Proofs)}\label{append}
\vspace{1em}

\noindent\textbf{Lemma~\ref{p_is_0}. }
Let $N^n(x)$ be the number of measurements taken on $x$ when the total number of measurements is $n$. If $N^n(x)\rightarrow\infty$ as $n\rightarrow\infty$, then for $\forall l\in\{1,2,...,L\}$ such that $f(x;\k_l)\ne f(x;\k^*)$, $p_l^n\rightarrow 0$ almost surely, i.e, $\mathbb{P}(\lim_{n\rightarrow \infty}p_l^n=0)=1$.

\subsubsection*{Proof of Lemma~\ref{p_is_0}}

We prove the following claim first.

{\bf Claim 1:} $\forall m\in \{1,...,M\}$, let $p_1^n, p_2^n, ..., p_L^n$ be the probabilities of the $L$ $\k$'s after $n$ measurements at $x_m$ (i.e, we take $n$ measurements only at $x$ and nowhere else). For $\forall \e>0$, $\forall l\in\{1,2,...,L\}$ such that $f(x_m;\k_l)\ne f(x_m;\k^*)$, $\mathbb{P}(p_l^n<\e)=1$, as $n\rightarrow \infty$.

{\bf Proof of Claim 1:}

If we take $n$ measurements at $x_m$, suppose the results are $\{\hat{y}^1,...,\hat{y}^n\}$. Let $f_l = f(x_m;\k_l)$. Then $p_1,...,p_L$ are given by:
\small
\begin{align*}
p^n_l=\frac{\prod_{j=1}^{n}\exp[-\frac{(\hat{y}^j-f_l)^2}{2\sigma^2}]}{\sum_{i=1}^L\prod_{j=1}^{n}\exp[-\frac{(\hat{y}^j-f_i)^2}{2\sigma^2}]}.
\end{align*}\normalsize

Let $f^*=f(x_m;\k^*)$. Then $\hat{y}^j\sim N(f^*, \sigma^2)$. Let $Z^j=\frac{\hat{y}^j-f^*}{\sigma}$, then $\hat{y}^j\sim f^*+\sigma Z^j$, $Z^j\sim N(0,1)$.

For any $l$ such that $f_l\ne f^*$,
\small
\begin{align*}
p_l^n \leq & \frac{\prod_{j=1}^{n}\exp[-\frac{(\hat{y}^j-f_l)^2}{2\sigma^2}]}{\prod_{j=1}^{n}\exp[-\frac{(\hat{y}^j-f^*)^2}{2\sigma^2}]}
= \frac{\exp[-\frac{\sum_{j=1}^n (\hat{y}^j-f_l)^2}{2\sigma^2}]}{\exp[-\frac{\sum_{j=1}^n (\hat{y}^j-f^*)^2}{2\sigma^2}]}
= \frac{\exp[-\frac{\sum_{j=1}^n (f^*-f_l+\sigma Z^j)^2}{2\sigma^2}]}{\exp[-\frac{\sum_{j=1}^n (Z^j)^2}{2}]}\\
= &\exp\left(-\frac{\sum_{j=1}^n [(f^*-f_l+\sigma Z^j)^2 - (\sigma Z^j)^2]}{2\sigma^2}\right)\\
= & \exp\left(-\frac{ n(f^*-f_l)^2 + 2\sigma (f^*-f_l)\sum_{j=1}^n Z^j }{2\sigma^2}\right).\numberthis\label{Eq_bound1}
\end{align*}\normalsize

For any $\e>0$, $\exp\left(-\frac{ n(f^*-f_l)^2 + 2\sigma (f^*-f_l)\sum_{j=1}^n Z^j }{2\sigma^2}\right)<\e \Rightarrow p_l^n<\e$, and therefore\\ $\{\lim_{n\rightarrow\infty}\exp\left(-\frac{ n(f^*-f_l)^2 + 2\sigma (f^*-f_l)\sum_{j=1}^n Z^j }{2\sigma^2}\right) = 0\} \subseteq \{ \lim_{n\rightarrow \infty}p_l^n = 0 \}$. We show that\small
\begin{align*}
\mathbb{P}\left[\lim_{n\rightarrow\infty} \exp\left(-\frac{ n(f^*-f_l)^2 + 2\sigma (f^*-f_l)\sum_{j=1}^n Z^j }{2\sigma^2}\right) = 0\right]=1,
\end{align*}\normalsize
which implies the almost sure convergence of $p_l^n$.

By the strong law of large numbers, $\frac{\sum_{j=1}^n Z^j}{n} \xlongrightarrow{a.s.} 0$. That is, there exists $\Omega_{x,l}\in \Omega$, such that for $\forall \omega\in\Omega_{x,l}$, $\lim_{n\rightarrow\infty}\frac{\sum_{j=1}^n Z^j(\omega)}{n}=0$, and $\mathbb{P}(\Omega_{x,l})=1$. Our goal is to show that for $\forall \e>0$, $\forall \omega\in\Omega_{x,l}$,\small
\begin{align*}
\lim_{n\rightarrow\infty} \exp\left(-\frac{ n(f^*-f_l)^2 + 2\sigma (f^*-f_l)\sum_{j=1}^n Z^j(\omega) }{2\sigma^2}\right) = 0.
\end{align*}\normalsize

Without loss of generality, we assume $f^*-f_l\geq 0$. For $\forall \omega\in \Omega_{x,l}$, there exists $N_1\in\mathbb{N}$ such that for $\forall n>N_1$, $\left| \frac{\sum_{j=1}^n Z^j(\omega)}{n} \right| < \frac{f^*-f_l}{4\sigma}$.

Therefore, for any $n>N_1$, $n\left[(f^*-f_l)^2 + 2\sigma (f^*-f_l)\frac{\sum_{j=1}^n Z^j(\omega)}{n}\right] > n\cdot \frac{(f^*-f_l)^2}{2}$. Hence, for $\forall n>N_1$,
\small
\begin{align}
\exp\left(-\frac{ n(f^*-f_l)^2 + 2\sigma (f^*-f_l)\sum_{j=1}^n Z^j(\omega) }{2\sigma^2}\right) < \exp\left(-\frac{n(f^*-f_l)^2}{4\sigma^2}\right).\label{Eq_bound2}
\end{align}
\normalsize

For $\forall \e>0$, there exists $N_2$ such that for $\forall n>N_2$, $\exp\left(-\frac{n(f^*-f_l)^2}{4\sigma^2}\right)<\e$.

Take $N=\max(N_1, N_2)$, we know that for $\forall \omega\in\Omega_{x,l}$, $\forall \e>0$, there exists $N>0$, such that for $\forall n>N$,
\small
\begin{align*}
\exp\left(-\frac{ n(f^*-f_l)^2 + 2\sigma (f^*-f_l)\sum_{j=1}^n Z^j(\omega) }{2\sigma^2}\right) < \e.
\end{align*}
\normalsize

Hence,
\small
\begin{align*}
\Omega_{x,l}\subseteq \{ \lim_{n\rightarrow\infty} \exp\left(-\frac{ n(f^*-f_l)^2 + 2\sigma (f^*-f_l)\sum_{j=1}^n Z^j }{2\sigma^2}\right) \} \subseteq \{\lim_{n\rightarrow \infty}p_l^n=0\}.
\end{align*}\normalsize

Since $\mathbb{P}(\Omega_{x,l})=1$, we have $\mathbb{P}\{\lim_{n\rightarrow \infty}p_l^n=0\} = 1$. \hfill$\Box$

%
%
%
%
%
%
%
%


\bigskip

Back to Lemma~\ref{p_is_0}, define $N^n(x,j)$ as the actual time when we make the $j$-th measurement on $x$ ($1\leq j\leq N^n(x)$). We have:
\small
\begin{align*}
p^n_l&= \frac{\prod_{j=1}^{n}\exp[-\frac{(\hat{y}^j-f(x^{j-1};\k_l))^2}{2\sigma^2}]}{\sum_{i=1}^L\prod_{j=1}^{n}\exp[-\frac{(\hat{y}^j-f(x^{j-1};\k_i)^2}{2\sigma^2}]}
\leq  \frac{\prod_{j=1}^{n}\exp[-\frac{(\hat{y}^j-f(x^{j-1};\k_l))^2}{2\sigma^2}]}{\prod_{j=1}^{n}\exp[-\frac{(\hat{y}^j-f(x^{j-1};\k^{^*})^2}{2\sigma^2}]}\\ &= \prod_{x\in\mathcal{X}}\frac{\prod_{j=1}^{N^n(x)}\exp[-\frac{(\hat{y}^{t^n(x,j)}-f(x^{j-1};\k_l))^2}{2\sigma^2}]}{\prod_{j=1}^{N^n(x)}\exp[-\frac{(\hat{y}^{N^n(x,j)}-f(x^{j-1};\k^{^*})^2}{2\sigma^2}]}\\ &\xlongequal{def}\prod_{x\in\mathcal{X}}r(x)\\
&= \prod_{x\in \{x: f_l(x)=f^*(x)\}}r(x) \prod_{\substack{x\in \{x:N^n(x)=\infty,\\f_l(x)\ne f(x;\k^*)\}}}r(x) \prod_{\substack{x\in \{x: N^n(x)\ne\infty,\\f_l(x)\ne f(x;\k^*)\} }}r(x).
\end{align*}
\normalsize

The first item equals $1$, and the third equals a constant number. For any $\omega\in \bigcap_{x\in\mathcal{X}} \Omega_{x,l}$, the second item has limit as $0$ by Claim 1. Since $\mathbb{P}(\bigcap_{x\in\mathcal{X}} \Omega_{x,l})=1-\mathbb{P}(\bigcup_{x\in\mathcal{X}}\Omega^c_{x,l})>1-\sum_{x\in\mathcal{X}}\mathbb{P}(\Omega_{x,l}^c)=1$, we have $p_l^n\xlongrightarrow{a.s}0$.


$\Box$

\bigskip\bigskip

\noindent\textbf{Lemma~\ref{f_non_neg}. }
For $\forall n\geq 0, \forall x\in \mathcal{X}$, the KGDP-$f$ score $\nu^{KGDP-f,n}(x)\geq 0$. Equality holds if and only if (1) there exists $x'$ such that $x'\in \mathop{\text{argmax}}_x f(x;\k_i)$ for all $i$ such that $p_i^n>0$, or (2) $p_i^n=0$ if $f(x;\k_i)\ne f(x;\k^{*})$.

\subsubsection*{Proof of Lemma~\ref{f_non_neg}}

Suppose at time $n$, the probabilities of our prior candidates are $(p_1^n, p_2^n, ..., p_L^n)$. Since $\max$ is a convex function, by Jensen's inequality,

\small
\begin{align*}
\nu^{KGDP,n}(x) &= \mathbb{E}^n[\max_{x'}\sum_{i=1}^Lf_i(x')p_i^{n+1}(x)|S^n=s,x^n=x]-\max_{x'}\sum_{i=1}^Lf_i(x')p_i^n\\
& \geq \max_{x'}\mathbb{E}[\sum_{i=1}^Lf_i(x')p_i^{n+1}(x)|S^n=s,x^n=x] - \max_{x'}\sum_{i=1}^Lf_i(x')p_i^n.
\end{align*}
\normalsize

We show that $\forall$ $x$, $x'$, $\mathbb{E}[\sum_{i=1}^Lf_i(x')p_i^{n+1}(x)|S^n=s,x^n=x] = \sum_{i=1}^Lf_i(x')p_i^n$. We know
\small
\begin{align*}
\mathbb{E}[\sum_{i=1}^Lf_i(x')p_i^{n+1}(x)|S^n=s,x^n=x] = \sum_{i=1}^Lf_i(x')\mathbb{E}[p_i^{n+1}(x)|S^n=s, x^n=x].
\end{align*}
\normalsize

Suppose the measurement at $x$ is $\hat{y}$, we have
\small
\begin{align*}
p_i^{n+1}(x)=\frac{p_i^n\exp[-\frac{(\hat{y}-f_i(x))^2}{2\sigma^2}]}{\sum_{j=1}^L p_j^n\exp[-\frac{(\hat{y}-f_j(x))^2}{2\sigma^2}]}.
\end{align*}
\normalsize

Based on our prior belief, the distribution of $\hat{y}$ is:
\small
\begin{align*}
\hat{y}\sim \left\{\begin{aligned}
& N(f_1(x;\k_1), \sigma^2), \text{with probability }p_1^n \\
& N(f_2(x;\k_2), \sigma^2), \text{with probability }p_2^n \\
& ...\\
& N(f_L(x;\k_L), \sigma^2), \text{with probability }p_L^n
\end{aligned} \right.
\end{align*}
\normalsize

Therefore,
\small
\begin{align*}
\mathbb{E} \left[p_i^{n+1}(x)\right] &= \mathbb{E} \left( \frac{p_i^n\exp[-\frac{(\hat{y}-f_i(x))^2}{2\sigma^2}]}{\sum_{j=1}^L p_j^n\exp[-\frac{(\hat{y}-f_j(x))^2}{2\sigma^2}]} \right)\\
&= \int_{-\infty}^{\infty} \sum_{k=1}^L p_k^n \frac{p_i^n\exp[-\frac{(\hat{y}-f_i(x))^2}{2\sigma^2}]}{\sum_{j=1}^L p_j^n\exp[-\frac{(\hat{y}-f_j(x))^2}{2\sigma^2}]}\frac{1}{\sqrt{2\pi}}\exp[-\frac{(\hat{y}-f_k(x))^2}{2\sigma^2}]d\hat{y}\\
&= \int_{-\infty}^{\infty} \frac{1}{\sqrt{2\pi}} p_i^n\exp[-\frac{(\hat{y}-f_i(x))^2}{2\sigma^2}] \frac{\sum_{k=1}^L p_k^n \exp[-\frac{(\hat{y}-f_k(x))^2}{2\sigma^2}]}{\sum_{j=1}^L p_j^n\exp[-\frac{(\hat{y}-f_j(x))^2}{2\sigma^2}]} d\hat{y}\\
&= p_i^n\int_{-\infty}^{\infty}\frac{1}{\sqrt{2\pi}} \exp[-\frac{(\hat{y}-f_i(x))^2}{2\sigma^2}] d\hat{y}\\
&= p_i^n.
\end{align*}\normalsize

Hence, for any $x$ and $x'$, $\mathbb{E}[\sum_{i=1}^Lf_i(x')p_i^{n+1}(x)|S^n=s,x^n=x] = \sum_{i=1}^Lf_i(x')p_i^n$. 

\vspace{2em}

For any convex function $f$, the general form of Jensen's inequality is $\mathbb{E}f(X)\geq f(\mathbb{E}X)$. The equality of Jensen's inequality requires that (1) $f$ is linear on $x$, or (2) $X$ is constant. (1) and (2) are equivalent to the two conditions stated in Lemma~\ref{f_non_neg} respectively:

For (1), $\max (X)$ is not a linear function in general, except in the aligned case as stated in Lemma~\ref{f_non_neg}, Condition 1.

For (2), this is equivalent to the following statement: $\forall$ $x'$, $\sum_{i=1}^L f(x';\k_i)p_i^{n+1}(x)$ is constant on all possible measurement outcomes $\hat{y}$ on $x$. This is further equivalent to the statement that the $p_i^{n+1}(x)$'s are constants. According to the updating rule given by {Equation~\ref{eq_p_update1}, this is equivalent to Condition 2 in Lemma~\ref{f_non_neg}.

$\Box$

\bigskip\bigskip

\noindent\textbf{Lemma~\ref{H_non_neg}. }
For $\forall n\geq 0, \forall x\in \mathcal{X}$, the KGDP-$H$ score $\nu^{KGDP-H,n}(x)\geq 0$. Equality holds if and only if $p_i^n=0$ if $f(x;\k_i)\ne f(x;\k^{*})$.

\subsubsection*{Proof of Lemma~\ref{H_non_neg}}

Suppose at time $n$, the probabilities of our prior candidates are $(p_1^n, p_2^n, ..., p_L^n)$. Since $x\log x$ is a convex function, by Jensen's inequality,

\small
\begin{align*}
\nu^{KGDP-H,n}(x) &= \mathbb{E}^n[\sum_{i=1}^L p_i^{n+1}(x)\log p_i^{n+1}(x) |S^n=s,x^n=x]-\sum_i^L p_i^n\log p_i^n\\
&= \sum_{i=1}^L \mathbb{E}^n[p_i^{n+1}(x)\log p_i^{n+1}(x) |S^n=s,x^n=x]-\sum_i^L p_i^n\log p_i^n\\
&\geq \sum_{i=1}^L \mathbb{E}^n\left[p_i^{n+1}(x)\right]\log \mathbb{E}^n\left[p_i^{n+1}(x)\right]-\sum_i^L p_i^n\log p_i^n\\
&= 0.
\end{align*}
\normalsize

Since $x\log x$ is nonlinear, the only case for equality is that $p_i^{n+1}(x)$ is a constant, which means all functions with positive probability have the same value at $x$.

$\Box$

\bigskip\bigskip

\noindent\textbf{Lemma~\ref{v_score0}. }
For $\forall \omega\in\Omega_0$, $\forall x\in\mathcal{X}'(\omega)$, we have $\lim_{n\rightarrow\infty}\nu^{KGDP-f,n}(x)(\omega)=0$, and\\ $\lim_{n\rightarrow\infty}\nu^{KGDP-H,n}(x)(\omega)=0$.

\subsubsection*{Proof of Lemma~\ref{v_score0}}

First, note that for fixed $x$ and $n$, $\nu^{KGDP-f,n}(x)$ and $\nu^{KGDP-H,n}(x)$ are functions of $\vec{p}=(p_1^n,...,p_L^n)$. In order to concentrate on $\vec{p}$, we pick any fixed $\omega\in\Omega_0$, any fixed $x\in\mathcal{X}'(\omega)$, and then denote $\nu^{KGDP-f,n}(x)$ and $\nu^{KGDP-H,n}(x)$ as $\nu_f(\vec{p})$ and $\nu_H(\vec{p})$. For this fixed $x$, we assume there are $h$ $(1\leq h\leq L)$ functions equal to $f(x;\k^*)$ at $x$. Without loss of generality, we assume they are $\k_1,...,\k_h$. That is, $f(x;\k_i)=f(x;\k^*)$, for $\forall i\in [h]$.

Let $\vec{p_0} = (0,0,...,0, p_{h+1}, ..., p_L)$, where $p_{h+1}+...+p_L=1$. By Lemma~\ref{f_non_neg} and Lemma~\ref{H_non_neg}, $\vec{p_0}$ meets the conditions for $\nu_f(\vec{p})$ and $\nu_H(\vec{p})$ to be $0$. That is, $\nu_f(\vec{p_0})=0$, $\nu_H(\vec{p_0})=0$.

Obviously, $\nu_f(\vec{p})$ and $\nu_H(\vec{p})$ are continuous in $\vec{p}$. We take $\nu_f(\vec{p})$ as an example.
\begin{fleqn}[0pt]
\begin{flalign*}
&{\small \sum_{i=1}^L f_i(x')p_i^n\exp\left[-\frac{(\hat{y} - f_i(x))^2}{2\sigma^2}\right]} \text{ is continuous. Therefore, we know that }\\
&{\small \max_{x'}\left(\sum_{i=1}^L f_i(x')p_i^n\exp\left[-\frac{(\hat{y} - f_i(x))^2}{2\sigma^2}\right]\right)} \text{ is also continuous.} \text{ Hence, }\\
&\int_{-\infty}^{+\infty}\max_{x'}\left(\sum_{i=1}^L f_i(x')p_i^n\exp\left[-\frac{(\hat{y} - f_i(x))^2}{2\sigma^2}\right]\right)d\hat{y} \text{ is continuous, too. Then by }\\&\text{Equation~\ref{eq_KGDP2}, } \nu_f(\vec{p}) \text{ is continuous}.
\end{flalign*}
\end{fleqn}


Since $\vec{p}$ is defined on a compact set, by the Heine-Cantor theorem, both $\nu_f(\vec{p})$ and $\nu_H(\vec{p})$ are uniformly continuous. That is, for $\forall \e>0$, there exists $\delta>0$, such that $\forall \vec{p_1}\ne \vec{p2}$, $|\vec{p_1}-\vec{p_2}|<\delta$, $|\nu_f(\vec{p_1})-\nu_f(\vec{p_2})|<\e$, $|\nu_H(\vec{p_1})-\nu_H(\vec{p_2})|<\e$.

For this $\omega\in\Omega_0$ and $x\in\mathcal{X}'(\omega)$, since $\lim_{n\rightarrow\infty}p_i^n(\omega)=0$, $1\leq i\leq h$, there exists $N$ such that for $\forall n>N$, $p_i^n(\omega)<\delta/(L+1)$. Let $\vec{p'}_n=(0,...,0, p^n_{h+1},...,p^n_{L-1}, p^n_L+\sum_{i=1}^h p_i^n)$. Then $v_f(\vec{p'})=v_H(\vec{p'})=0$, and $|\vec{p}_n - \vec{p'}_n|<\sqrt{h+h^2}\cdot \frac{\delta}{L+1}<\delta$.

Therefore, $\nu_f(\vec{p_n})<\e$, $\nu_H(\vec{p_n})<\e$, $\forall n>N$. Hence, for $\forall \omega\in\Omega_0$, and $x\in\mathcal{X}'(\omega)$, $\lim_{n\rightarrow\infty}\nu^{KGDP-f,n}(x)=0$, $\lim_{n\rightarrow\infty}\nu^{KGDP-H,n}(x)=0$.

$\Box$

\bigskip\bigskip

\noindent\textbf{Lemma~\ref{measure_suf1}. }
For any $\omega\in\Omega_0$, the alternatives measured infinitely often under the KGDP-$f$ or KGDP-$H$ policy constitute a sufficient set.

\subsubsection*{Proof of Lemma~\ref{measure_suf1}}

Assume the contrary. For any $\omega\in\Omega_0$, if $\mathcal{X}_{\infty}(\omega)$ is not a sufficient set, then there exist other $\k$'s that fit the true values of $\mathcal{X}_{\infty}(\omega)$. Assume there are $h$ such $\k$'s, $1\leq h\leq L$. WLOG, assume they are $\k_1$,...,$\k_h$. So for any $l\in[h]$,
\begin{align*}
f(x;\k_l) = f(x;\k^*), \text{ for }\forall x\in\mathcal{X}_\infty(\omega).
\end{align*}

By Lemma~\ref{p_is_0}, for any $h+1\leq l \leq L$, $\lim_{n\rightarrow\infty}p_l^n(\omega)=0$.

We now show that for any $1\leq l \leq h$, $\lim_{n\rightarrow\infty}p_l^n(\omega)>0$ exists.

For this $\omega$, let $\mathcal{L}^n_1,\mathcal{L}^n_2,...,\mathcal{L}^n_L$ be the likelihood of the $L$ $\k$'s according to the first $n$ measurements. Hence,\small
\begin{align*}
\mathcal{L}^n_l = \prod_{i=1}^n \exp(-\frac{(\hat{y}^i_x - f(x;\k_l))^2}{2\sigma^2}).
\end{align*}\normalsize

Let $T$ be the last time that we measure any $x$ out of $\mathcal{X}_\infty(\omega)$. After time $T$, for any $l\in\{1,2,...,h\}$,
{\small
\begin{align*}
&p_l^n(\omega) = \frac{\mathcal{L}^n_l}{\sum_{m=1}^h \mathcal{L}^n_m + \sum_{m=1}^h \mathcal{L}^n_m }\\
=& \frac{\mathcal{L}^T_l \prod_{i=T+1}^n \exp(-\frac{(\hat{y}^i_x - f(x;\k^{*}))^2}{2\sigma^2})}{\sum_{m=1}^h \mathcal{L}^T_m \prod_{i=T+1}^n \exp(-\frac{(\hat{y}^i_x - f(x;\k^{*}))^2}{2\sigma^2}) + \sum_{m=h+1}^L \mathcal{L}^T_m \prod_{i=T+1}^n \exp(-\frac{(\hat{y}^i_x - f(x;\k_{m}))^2}{2\sigma^2})}\\ 
=& \frac{\mathcal{L}^T_l}{\sum_{m=1}^h \mathcal{L}^T_m + \sum_{m=h+1}^L \mathcal{L}^T_m \prod_{i=T+1}^n \frac{\exp(-\frac{(\hat{y}^i_x - f(x;\k_{m}))^2}{2\sigma^2})}{\exp(-\frac{(\hat{y}^i_x - f(x;\k^*))^2}{2\sigma^2})}}.
\end{align*}
}

In Proof of Lemma~\ref{p_is_0}, we have shown that $\lim_{n\rightarrow\infty} \prod_{i=T+1}^n \frac{\exp(-\frac{(\hat{y}^i_x - f(x;\k_{m}))^2}{2\sigma^2})}{\exp(-\frac{(\hat{y}^i_x - f(x;\k^*))^2}{2\sigma^2})} = 0$. Therefore,
\small\begin{align*}
\lim_{n\rightarrow\infty}p_l^n(\omega)=\frac{\mathcal{L}^T_l}{\sum_{m=1}^h \mathcal{L}^T_m}>0 \text{ exists}.
\end{align*}\normalsize

By Lemma~\ref{v_score0}, for $\forall x\in\mathcal{X}_\infty(\omega)$, $\lim_{n\rightarrow\infty}\nu^{KGDP-f,n}(x)(\omega)=0$, \\ $\lim_{n\rightarrow\infty}\nu^{KGDP-H,n}(x)(\omega)=0$. For $\forall x\in\mathcal{X}_\infty(\omega)^c$, $\lim_{n\rightarrow\infty}\nu^{KGDP-f,n}(x)(\omega)>0$, $\lim_{n\rightarrow\infty}\nu^{KGDP-H,n}(x)(\omega)>0$. Since the KGDP-$f$ (or KGDP-$H$) policy  always chooses the $x$ with the largest KGDP-$f$ (or KGDP-$H$) score, it will measure some $x\in\mathcal{X}'(\omega)^c$ after $T$, which is contradictory to our assumption.

$\Box$

\bigskip\bigskip
\noindent\textbf{Theorem~\ref{KGDP-f_con}. }
Non-resampling KGDP-$f$ with truth from prior is asymptotically optimal in finding both the optimal alternative and the correct parameter. The same holds for KGDP-$H$.

\subsubsection*{Proof of Theorem~\ref{KGDP-f_con}}

Under either KGDP-$f$ or KGDP-$H$ policy, for any $\k_l\ne \k^*$, Lemma~\ref{measure_suf1} implies that for any $\omega\in\Omega_0$, there exists $x\in\mathcal{X}_\infty(\omega)$ such that $f(x;\k_l)\ne f(x;\k^*)$. By Lemma~\ref{p_is_0}, we have $\mathbb{P}(\lim_{n\rightarrow\infty}p_l^n=0)=1$. That is, $\mathbb{P}(\lim_{n\rightarrow\infty}p^n(\k^*)=1)=1$, and $\mathbb{P}(\bar{f}(x)=f(x;\k^*))=1$.

$\Box$

\bigskip\bigskip
\noindent\textbf{Lemma~\ref{measure_suf}. }
For any $\omega\in\Omega_1$, the alternatives measured infinitely often under the KGDP-$f$ or KGDP-$H$ policy constitute a sufficient set. We denote this set as $\mathcal{X}_\infty(\omega)$.

\subsubsection*{Proof of Lemma~\ref{measure_suf}}


For any $\omega\in\Omega_1$, let $T$ be the last time that we measure any $x$ out of $\mathcal{X}_\infty(\omega)$.

Assume the contrary (thus $T>0$). Since $\mathcal{X}_\infty(\omega)$ is not a sufficient set, there exists at least one $\k\in\mathbb{K}$, $\k\ne\k^*$ such that $f(x;\k)=f(x;\k^*)$ for $\forall x\in\mathcal{X}_\infty(\omega)$. Denote the set of these $\k$'s as $\mathbb{K}'$. 

Let $\mathcal{L}^n_1,\mathcal{L}^n_2,...,\mathcal{L}^n_K$ be the likelihood of the $K$ $\k$'s according to the first $n$ measurements. If we regard $\mathbb{K}$ as the ``candidate set'' and let $w^n(\k)$ denote its ``probability'', then as Proof of Lemma~\ref{measure_suf1} shows, for $\forall \k\in\mathbb{K}'\bigcup\{\k^*\}$, $\lim_{n\rightarrow\infty} w^n(\k)=\frac{\mathcal{L}^T_{\k}}{\sum_{i=1}^K \mathcal{L}^T_i}>0$. For all other $\k$'s, by Lemma~\ref{p_is_0}, $\lim_{n\rightarrow\infty}w^n(\k)=0$. $w^n(\k)$ is actually the weight of $\k$ when we do resampling. Hence, as $n$ gets larger, $\k$'s in $\mathbb{K}'\bigcup\{\k^*\}$ always rank higher than others to enter the small pool (i.e, the sub-level set), from which we resample.

\bigskip

We refer to the event as $A_n$ that at least two $\k$'s in $\mathbb{K}'\bigcup \{\k^*\}$ are in the candidate set at time $n$. We first show that $A_n$ happens infinitely often. That is, $\mathbb{P}_\k(\bigcap_{m\geq 1}\bigcup_{n\geq m}A_n)=1$, where $\mathbb{P}_\k$ indicates the fact that this probability is calculated in the $(\Omega_\k, \mathcal{F}_\k, \mathbb{P}_\k)$ space.



There are two cases at a resampling step: (1) $|\mathbb{L}_{rm}|\geq 1$ or (2) $|\mathbb{L}_{rm}|=1$ (remember that $L_{rm}$ is the set of candidates to remove). Since resampling happens infinite often, at least either (1) or (2) happens infinitely often.


We first assume (1) happens infinitely often, indexed by a subsequence of times $\{s_n\}$. Let $B_{s_n}$ be the event that at time $s_n$, two $\k$'s from $\mathbb{K}'\bigcup\{\k^*\}$ are selected in the first draw. (Remember that we may have multiple draws if resampling keeps being triggered. However, for $n$ large enough, if two such $\k$'s are selected in the first draw, they will not be dropped in this iteration. Hence, $B_{s_n}\subset A_{s_n}$) Since all $w^n(\k)$'s have limits, we know $\lim_{n\rightarrow\infty} \mathbb{P}_\k (B_{s_n}) >0$. Therefore, $\sum_{n=1}^\infty \mathbb{P}_\k(B_{s_n})=\infty$. On the other hand, for a fixed $\omega$, $w^n(\k)$'s are all decided, and therefore $B_{s_n}$ are pairwise independent. By the Borel-Cantelli Lemma, the probability that $B_{s_n}$ happens infinitely often, i.e, $\mathbb{P}_\k(\bigcap_{m\geq 1}\bigcup_{s_n\geq m}B_{s_n})=1$. We know that $B_{s_n}$ is a subset of $A_{s_n}$. Hence, $\mathbb{P}_\k(\bigcap_{m\geq 1}\bigcup_{s_n\geq m}A_{s_n})=1$, and furthermore, $\mathbb{P}_\k(\bigcap_{m\geq 1}\bigcup_{n\geq m}A_{n})=1$.

Otherwise, if (1) happens a finite number of times and (2) happens infinitely often indexed by $\{t_n\}$, we assume the contrary. That is, there exists time $T_2$, such that after $T_2$, among the $L$ candidates there is at most one $\k\in \mathbb{K}'\bigcup \{\k^*\}$. At time $t_n$, since $|\mathbb{L}_{rm}|=1$, by assumption, the other $L-1$ candidates must be identical, and the newly drawn one must be the same as the other $L-1$ ones. The probability that this keeps happening from time $t_M$ to infinity, is $\lim_{N\rightarrow\infty} \prod_{n=M}^N w^{t_n}(\k)=0$. Therefore, our assumption does not hold and $A_n$ stills happens infinitely often!


\bigskip

Hence, for any fixed $\omega\in\Omega_1$, with probability $1$, we have the subsequence $\{q_n(\omega)\}$, such that at these times, the candidates contain at least two $\k$'s in $\mathbb{K}'\bigcup \{\k^*\}$, denoted as $\k_1^n$ and $\k_2^n$. For any $x\in \mathcal{X}_\infty(\omega)$, $\lim_{n\rightarrow\infty} \nu^{KGDP-f,n}(x)(\omega)=0$,\\ $\lim_{n\rightarrow\infty} \nu^{KGDP-H,n}(x)(\omega)=0$; while for any $x$ such that $f(x;\k_1^n)\ne f(x;\k_2^n)$ and any $q_n$, $\nu^{KGDP-f,q_n}(x)(\omega)>0$, $\nu^{KGDP-H,q_n}(x)(\omega)>0$. As there are only finite combinations of $\k_1$ and $\k_2$ and each $\k\in\mathbb{K}'\bigcap \{\k^*\}$ has a positive probability in the limit, there exists $\e>0$, such that $\sup_{} \nu^{KGDP-f,q_n}(x)(\omega)>\e$, $\nu^{KGDP-H,q_n}(x)(\omega)>\e$. Since $q_n$ can be larger than $T$, this is contradictory to the fact that KGDP-$f$ or KGDP-$H$ will not measure  any $x$ out of $\mathcal{X}_\infty(\omega)$ after time $T$.

$\Box$

\bigskip\bigskip
\noindent\textbf{Theorem~\ref{re_KGDP-f_con}. }
Resampling KGDP-$f$ is asymptotically optimal in finding the optimal alternative and the correct parameter. The same holds for KGDP-$H$.

\subsubsection*{Proof of Theorem~\ref{re_KGDP-f_con}}

For any fixed $\omega\in\Omega_1$, let $C_n$ ($C_n\in\mathcal{F}_\k$) be the set of events that the $L$ candidates includes $\k^*$ at time $n$.

By Lemma~\ref{measure_suf}, $\k^*$ is the only parameter in $\mathbb{K}$ that fits $\mathcal{X}_\infty(\omega)$. Assume $\{s_n\}$ is the subsequence of times when resampling happens. Then $\mathbb{P}_\k (C_{s_n})>\frac{w^{s_n}(\k^*)(\omega)}{\sum_{i=1}^K w^{s_n}_i(\omega)}$, where the right hand side is the probability that $\k^*$ is chosen in the first draw.

We show $\mathbb{P}_\k(\bigcup_{m\geq 1}\bigcap_{n\geq m}C_n)=1$, which means that $\k^*$ is in the candidate set for all but a finite number of times.

Since resampling only happens on $\{s_n\}$, we know $\mathbb{P}_\k(\bigcup_{m\geq 1}\bigcap_{n\geq m}C_n)=$ \\$\mathbb{P}_\k(\bigcup_{m\geq 1}\bigcap_{n\geq m}C_{s_n})$. Intuitively, this means if after time $T$, every resampling step chooses $\k^*$ for the candidate set, then $\k^*$ appears in the candidate set every iteration after $T$. 

For this fixed $\omega$, we know\small
\begin{align*}
\mathbb{P}_\k (C_{s_n}^c) = 1- \mathbb{P}_\k (C_{s_n})< \frac{\sum_{\k\in\mathbb{K}, \k\ne\k^*} w^{s_n}(\k)}{\sum_{\k\in\mathbb{K}} w^{s_n}(\k)}<\frac{\sum_{\k\in\mathbb{K}, \k\ne\k^*} w^{s_n}(\k)}{w^{s_n}(\k^*)}.
\end{align*}\normalsize

From proof of Lemma~\ref{p_is_0} (for detail, check out Equation~(\ref{Eq_bound1}) and (\ref{Eq_bound2})), for this $\omega\in\Omega_1$, we know there exists $N_1$, such that for all $n>N_1$ and any $\k\ne\k^*$,\small
\begin{align*}
\frac{w^{s_n}(\k)}{w^{s_n}(\k^*)} < \exp(-n\frac{(f^*-f_l)^2}{4\sigma^2}).
\end{align*}\normalsize

Assume $\underset{l,x}{\sup} \frac{(f^*-f_l)^2}{4\sigma^2}=\rho$. Hence, for $n>N_1$, $\mathbb{P}_\k (C_{s_n}^c)<K e^{-n\rho}$. Therefore,
\vspace{-0.5em}\small\begin{align*}
\sum_{n\geq 1}\mathbb{P}_\k(C_{s_n}^c) < \sum_{n \geq 1}\mathbb{P}_\k(D_{n}^c) < N_1 + \sum_{n \geq N_1+1}Ke^{-n\rho} = N_1+ K \frac{e^{-(N_1+1)\rho}}{(1-e^{-\rho})}<\infty.
\end{align*}
\normalsize
\vspace{-0.5em}

By the Borel-Cantelli Lemma, $\mathbb{P}_\k(\bigcap_{m\geq 1}\bigcup_{n\geq m}C_{s_n}^c)=0$. Hence,\\ $\mathbb{P}_\k(\bigcup_{m\geq 1}\bigcap_{n\geq m}C_{n})=\mathbb{P}_\k(\bigcup_{m\geq 1}\bigcap_{n\geq m}C_{s_n})=1$.

Hence, for any $\omega\in\Omega_1$, with probability $1$, there exists $T(\omega)$, such that for any $n\geq T(\omega)$, $\k^*$ is in the candidate set, and $\lim_{n\rightarrow\infty}p^n(\k^*)(\omega)=1$.


This holds for every $\omega\in\Omega_1$, hence, $\mathbb{P}_{\Omega\times\Omega_\k} \{ \lim_{n\rightarrow\infty}p^{n}(\k^*) = 1 \} = 1$, where $\mathbb{P}_{\Omega\times\Omega_\k}$ indicates the probability in the full probability space $(\Omega_f, \mathcal{F}_f, \mathbb{P}_f)$, which is the product space of $(\Omega, \mathcal{F}, \mathbb{P})$ and $(\Omega_\k, \mathcal{F}_\k, \mathbb{P}_\k)$. Since we find $\k^*$ with probability $1$, we also find the optimal $x$ with probability $1$.

$\Box$

\bibliographystyle{siamplain}
\bibliography{library,reference}

\begin{thebibliography}{10}

\bibitem{Audibert2010}
{\sc J.-Y. Audibert and S.~Bubeck}, {\em {Best Arm Identification in
  Multi-Armed Bandits}}, in COLT 2010 - Proc., 2010, p.~13 p.

\bibitem{Auer2002}
{\sc P.~Auer, N.~Cesa-Bianchi, and P.~Fischer}, {\em {Finite-time analysis of
  the multiarmed bandit problem}}, Mach. Learn., 47 (2002), pp.~235--256,
  \href{http://dx.doi.org/10.1023/A:1013689704352}{doi:\nolinkurl{10.1023/A:1013689704352}}.

\bibitem{Barton2006}
{\sc R.~R. Barton and M.~Meckesheimer}, {\em {Chapter 18 Metamodel-Based
  Simulation Optimization}}, Handbooks Oper. Res. Manag. Sci., 13 (2006),
  pp.~535--574,
  \href{http://dx.doi.org/10.1016/S0927-0507(06)13018-2}{doi:\nolinkurl{10.1016/S0927-0507(06)13018-2}}.

\bibitem{Bianchi2009}
{\sc L.~Bianchi, M.~Dorigo, L.~M. Gambardella, and W.~J. Gutjahr}, {\em {A
  survey on metaheuristics for stochastic combinatorial optimization}}, Nat.
  Comput., 8 (2009), pp.~239--287,
  \href{http://dx.doi.org/10.1007/s11047-008-9098-4}{doi:\nolinkurl{10.1007/s11047-008-9098-4}}.

\bibitem{Chen2000}
{\sc C.~H. Chen, J.~Lin, E.~Y{\"{u}}cesan, and S.~E. Chick}, {\em {Simulation
  budget allocation for further enhancing the efficiency of ordinal
  optimization}}, Discret. Event Dyn. Syst. Theory Appl., 10 (2000),
  pp.~251--270,
  \href{http://dx.doi.org/10.1023/A:1008349927281}{doi:\nolinkurl{10.1023/A:1008349927281}}.

\bibitem{Chen1995}
{\sc C.-H. C. C.-H. Chen}, {\em {An effective approach to smartly allocate
  computing budget for discrete event simulation}}, Proc. 1995 34th IEEE Conf.
  Decis. Control, 3 (1995), pp.~2598--2603,
  \href{http://dx.doi.org/10.1109/CDC.1995.478499}{doi:\nolinkurl{10.1109/CDC.1995.478499}}.

\bibitem{Chen2014}
{\sc S.~Chen, K.-r.~G. Reyes, M.~Gupta, M.~C. Mcalpine, and W.~B. Powell}, {\em
  {Optimal learning in experimental design using the knowledge gradient policy
  with application to characterizing nanoemulsion stability}}, SIAM/ASA J.
  Uncertain. Quantif., 3 (2014), pp.~320--345,
  \href{http://dx.doi.org/10.1137/140971129}{doi:\nolinkurl{10.1137/140971129}}.

\bibitem{Chick2006}
{\sc S.~E. Chick}, {\em {Subjective Probability and Bayesian Methodology}},
  Handbooks Oper. Res. Manag. Sci., 13 (2006), pp.~225--257,
  \href{http://dx.doi.org/10.1016/S0927-0507(06)13009-1}{doi:\nolinkurl{10.1016/S0927-0507(06)13009-1}}.

\bibitem{Chick2001a}
{\sc S.~E. Chick and K.~Inoue}, {\em {New Two-Stage and Sequential Procedures
  for Selecting the Best Simulated System}}, Oper. Res., 49 (2001),
  pp.~732--743,
  \href{http://dx.doi.org/10.1287/opre.49.5.732.10615}{doi:\nolinkurl{10.1287/opre.49.5.732.10615}}.

\bibitem{Frazier2008}
{\sc P.~I. Frazier, W.~B. Powell, and S.~Dayanik}, {\em {A Knowledge-Gradient
  Policy for Sequential Information Collection}}, SIAM J. Control Optim., 47
  (2008), pp.~2410--2439,
  \href{http://dx.doi.org/10.1137/070693424}{doi:\nolinkurl{10.1137/070693424}}.

\bibitem{Frazier2009}
{\sc P.~I. Frazier, W.~B. Powell, and S.~Dayanik}, {\em {The knowledge-gradient
  policy for correlated normal beliefs}}, INFORMS J. Comput., 21 (2009),
  pp.~599--613,
  \href{http://dx.doi.org/10.1287/ijoc.1080.0314}{doi:\nolinkurl{10.1287/ijoc.1080.0314}}.

\bibitem{Fu2006}
{\sc M.~C. Fu}, {\em {Chapter 19 Gradient Estimation}}, Handbooks Oper. Res.
  Manag. Sci., 13 (2006), pp.~575--616,
  \href{http://dx.doi.org/10.1016/S0927-0507(06)13019-4}{doi:\nolinkurl{10.1016/S0927-0507(06)13019-4}}.

\bibitem{Gittins2011}
{\sc J.~Gittins, K.~Glazebrook, and R.~Weber}, {\em {Multi-armed Bandit
  Allocation Indices}}, 2011,
  \href{http://dx.doi.org/10.1002/9780470980033}{doi:\nolinkurl{10.1002/9780470980033}}.

\bibitem{Gramacy2011}
{\sc R.~B. Gramacy and H.~K.~H. Lee}, {\em {Optimization under unknown
  constraints}}, Bayesian Stat. 9,  (2011), p.~229.

\bibitem{Gupta1996}
{\sc S.~S. Gupta and K.~J. Miescke}, {\em {Bayesian look ahead one-stage
  sampling allocations for selection of the best population}}, J. Stat. Plan.
  Inference, 54 (1996), pp.~229--244.

\bibitem{Hastie2009}
{\sc T.~Hastie, R.~Tibshirani, and J.~Friedman}, {\em {The Elements of
  Statistical Learning: Data Mining, Inference, and Prediction}}, no.~2, 2009,
  \href{http://dx.doi.org/10.1007/b94608}{doi:\nolinkurl{10.1007/b94608}}.

\bibitem{He2007}
{\sc D.~He, S.~E. Chick, and C.-H. Chen}, {\em {Opportunity Cost and OCBA
  Selection Procedures in Ordinal Optimization for a Fixed Number of
  Alternative Systems}}, IEEE Trans. Syst. Man Cybern. Part C, Appl. Rev., 37
  (2007), pp.~951--961,
  \href{http://dx.doi.org/10.1109/TSMCC.2007.900656}{doi:\nolinkurl{10.1109/TSMCC.2007.900656}}.

\bibitem{Hong2009}
{\sc L.~Hong and B.~L. Nelson}, {\em {A brief introduction to optimization via
  simulation}}, Winter Simul. Conf.,  (2009), pp.~75--85.

\bibitem{2014arXiv1402}
{\sc H.~{Jiang} and U.~V. {Shanbhag}}, {\em {On the solution of stochastic
  optimization and variational problems in imperfect information regimes}},
  ArXiv e-prints,  (2014),
  \href{http://arxiv.org/abs/1402.1457}{arXiv:1402.1457}.

\bibitem{Jones1998}
{\sc D.~Jones, M.~Schonlau, and W.~Welch}, {\em {Efficient global optimization
  of expensive black-box functions}}, J. Glob. Optim., 13 (1998), pp.~455--492,
  \href{http://dx.doi.org/10.1023/a:1008306431147}{doi:\nolinkurl{10.1023/a:1008306431147}}.

\bibitem{Kaelbling1993}
{\sc L.~P. Kaelbling}, {\em {Learning in Embedded Systems}}, MIT Press,
  Cambridge, MA, 1993.

\bibitem{Kim2006}
{\sc S.-H. Kim and B.~L. Nelson}, {\em {Selecting the best system}}, vol.~13 of
  Handbooks in Operations Research and Management Science, Elsevier, 2006,
  \href{http://dx.doi.org/10.1016/S0927-0507(06)13017-0}{doi:\nolinkurl{10.1016/S0927-0507(06)13017-0}}.

\bibitem{Negoescu2011}
{\sc D.~M. Negoescu, P.~I. Frazier, and W.~B. Powell}, {\em {The
  knowledge-gradient algorithm for sequencing experiments in drug discovery}},
  INFORMS J. Comput., 23 (2011), pp.~346--363,
  \href{http://dx.doi.org/10.1287/ijoc.1100.0417}{doi:\nolinkurl{10.1287/ijoc.1100.0417}}.

\bibitem{Olafsson2006}
{\sc S.~{\'{O}}lafsson}, {\em {Chapter 21 Metaheuristics}}, Handbooks Oper.
  Res. Manag. Sci., 13 (2006), pp.~633--654,
  \href{http://dx.doi.org/10.1016/S0927-0507(06)13021-2}{doi:\nolinkurl{10.1016/S0927-0507(06)13021-2}}.

\bibitem{Powell2011}
{\sc W.~B. Powell}, {\em {Approximate Dynamic Programming: Solving the Curses
  of Dimensionality}}, John Wiley and Sons, New York, 2~ed., 2011.

\bibitem{Singh2000}
{\sc S.~Singh, T.~Jaakkola, M.~L. Littman, C.~Szepes, and A.~S. Hu}, {\em
  {Convergence Results for Single-Step On-Policy Reinforcement-Learning
  Algorithms}}, Mach. Learn., 39 (2000), pp.~287--308,
  \href{http://dx.doi.org/10.1023/A:1007678930559}{doi:\nolinkurl{10.1023/A:1007678930559}}.

\bibitem{Spall2003}
{\sc J.~Spall}, {\em {Introduction to Stochastic Search and Optimization}},
  Wiley, New York, 1~ed., 2003.

\bibitem{Streeter2006}
{\sc M.~J. Streeter and S.~F. Smith}, {\em {Principles and Practice of
  Constraint Programming - CP 2006: 12th International Conference, CP 2006,
  Nantes, France, September 25-29, 2006. Proceedings}}, Springer Berlin
  Heidelberg, Berlin, Heidelberg, 2006, ch.~A Simple D, pp.~560--574,
  \href{http://dx.doi.org/10.1007/11889205_40}{doi:\nolinkurl{10.1007/11889205_40}},
  \url{http://dx.doi.org/10.1007/11889205{\_}40}.

\bibitem{Sutton1998}
{\sc R.~Sutton and A.~Barto}, {\em {Reinforcement Learning: An Introduction}},
  MIT Press, Cambridge, MA, 1998.

\end{thebibliography}
\end{document}